\newtheorem{proposition}{Proposition}
{\bgroup
  \addtolength\abovedisplayshortskip{#1} 
  \addtolength\abovedisplayskip{#1}
  \addtolength\belowdisplayshortskip{#1}
  \addtolength\belowdisplayskip{#1}}
{\egroup\ignorespacesafterend}
\begin{document}
%
\title{Regularizing Deep Networks with Semantic Data Augmentation}

%
%
%
%

\author{Yulin~Wang\IEEEauthorrefmark{1},
        Gao~Huang\IEEEauthorrefmark{1},~\IEEEmembership{Member,~IEEE},
        Shiji~Song,~\IEEEmembership{Senior~Member,~IEEE},
        Xuran~Pan, 
        Yitong~Xia,
        and~Cheng~Wu 
\IEEEcompsocitemizethanks{\IEEEcompsocthanksitem Y. Wang, G. Huang, S. Song, X. Pan and C. Wu are with the Department of Automation, BNRist, Tsinghua University, Beijing 100084, China. Email: \{wang-yl19, pxr18\}@mails.tsinghua.edu.cn, \{gaohuang, shijis, wuc\}@tsinghua.edu.cn. Corresponding author: Gao Huang.
\IEEEcompsocthanksitem Y. Xia is with the School of Automation Science and Electrical Engineering, Beihang University, Beijing, China. Email: xyt990614@buaa.edu.cn
} 
}

%
%

\markboth{IEEE transactions on pattern analysis and machine intelligence}
{Shell \MakeLowercase{\textit{et al.}}: Bare Advanced Demo of IEEEtran.cls for IEEE Computer Society Journals}
%



\IEEEtitleabstractindextext{%
\begin{abstract}
  Data augmentation is widely known as a simple yet surprisingly effective technique for regularizing deep networks. Conventional data augmentation schemes, \emph{e.g.}, flipping, translation or rotation, are low-level, data-independent and class-agnostic operations, leading to limited diversity for augmented samples. To this end, we propose a novel semantic data augmentation algorithm to complement traditional approaches. The proposed method is inspired by the intriguing property that deep networks are effective in learning linearized features, \emph{i.e.}, certain directions in the deep feature space correspond to meaningful semantic transformations, \emph{e.g.}, changing the background or view angle of an object. Based on this observation, translating training samples along many such directions in the feature space can effectively augment the dataset for more diversity. To implement this idea, we first introduce a sampling based method to obtain semantically meaningful directions efficiently. Then, an upper bound of the \emph{expected} cross-entropy (CE) loss on the augmented training set is derived by assuming the number of augmented samples goes to infinity, yielding a highly efficient algorithm. In fact, we show that the proposed \emph{implicit semantic data augmentation (ISDA)} algorithm amounts to minimizing a novel robust CE loss, which adds minimal extra computational cost to a normal training procedure. 
  In addition to supervised learning, ISDA can be applied to semi-supervised learning tasks under the consistency regularization framework, where ISDA amounts to minimizing the upper bound of the expected KL-divergence between the augmented features and the original features. 
  Although being simple, ISDA consistently improves the generalization performance of popular deep models (\emph{e.g.}, ResNets and DenseNets) on a variety of datasets, \emph{i.e.}, CIFAR-10, CIFAR-100, SVHN, ImageNet, and Cityscapes. Code for reproducing our results is available at \color{blue}{\textit{https://github.com/blackfeather-wang/ISDA-for-Deep-Networks}}.

\end{abstract}

\begin{IEEEkeywords}
Data augmentation, deep learning, semi-supervised learning.
\end{IEEEkeywords}}

\maketitle

\begingroup\renewcommand\thefootnote{\IEEEauthorrefmark{1}}
\footnotetext{\emph{Equal contribution.}}

\IEEEdisplaynontitleabstractindextext

%
\IEEEpeerreviewmaketitle


\section{Introduction}

\IEEEPARstart{D}{ata} augmentation is an effective technique to alleviate the overfitting problem in training deep networks \cite{krizhevsky2009learning,krizhevsky2012imagenet,2014arXiv1409.1556S,He_2016_CVPR,2016arXiv160806993H}.
In the context of image recognition, this usually corresponds to applying content preserving transformations, e.g., cropping, horizontal mirroring, rotation and color jittering, on the input samples. 
Although being effective, these augmentation techniques are not capable of performing semantic transformations, such as changing the background of an object or the texture of a foreground object. Recent work has shown that data augmentation can be more powerful if these (class identity preserving) semantic transformations are allowed \cite{NIPS2017_6916, bowles2018gan, antoniou2017data}. For example, by training a generative adversarial network (GAN) for each class in the training set, one could then sample an infinite number of samples from the generator. Unfortunately, this procedure is computationally intensive because training generative models and inferring them to obtain augmented samples are both nontrivial tasks. Moreover, due to the extra augmented data, the training procedure is also likely to be prolonged.

\begin{figure}[t]
    \vskip -0.05in
    \begin{center}
    \centerline{\includegraphics[width=\columnwidth]{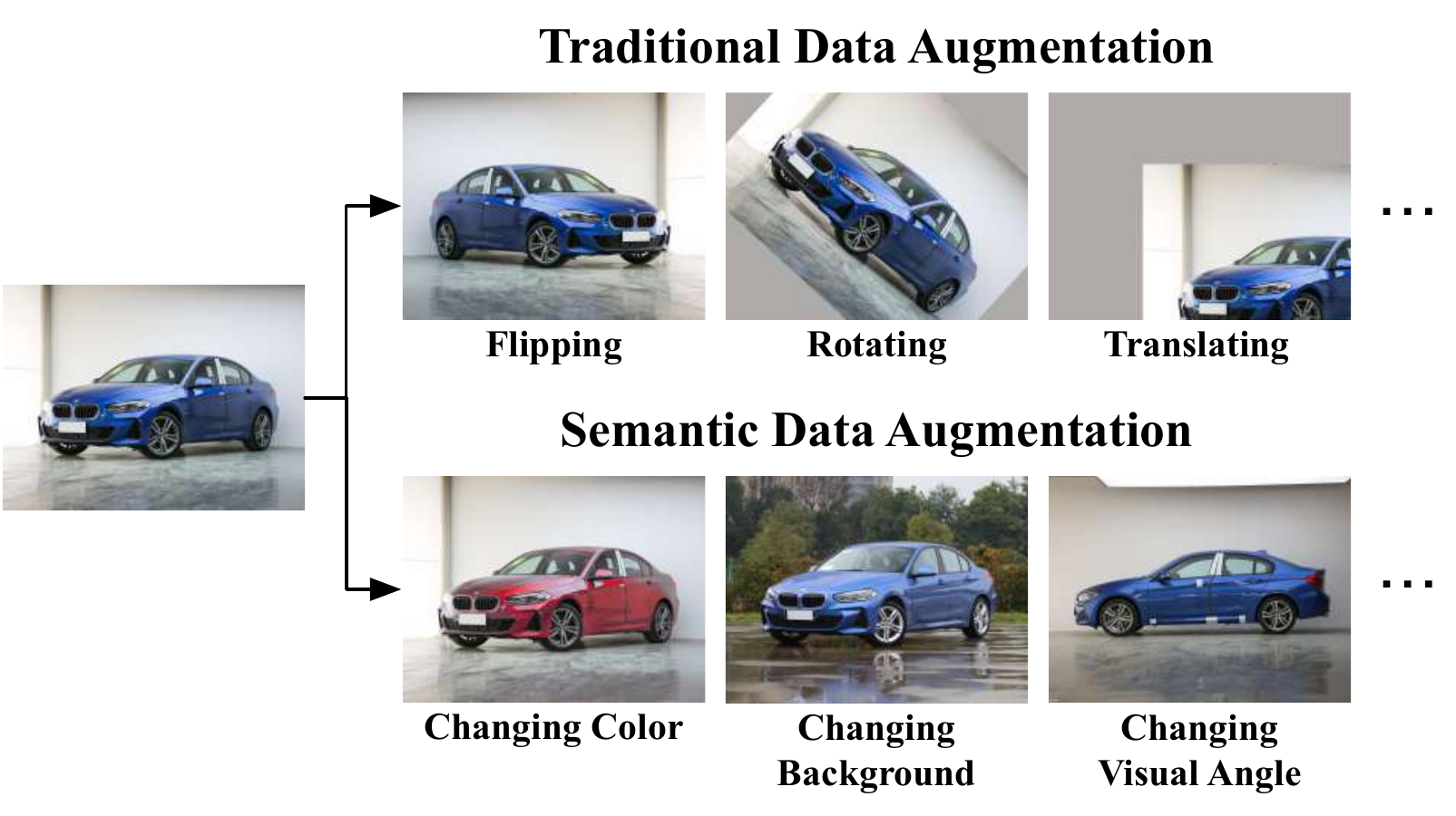}}  
    \vspace{-3ex}
    \caption{The comparison of traditional and semantic data augmentation. Conventionally, data augmentation usually corresponds to naive image transformations (like flipping, rotating, translating, etc.) in the pixel space. Performing class identity preserving semantic transformations (like changing the color of a car,  changing the background of an object, etc.) is another effective approach to augment the training data, which is complementary to traditional techniques.}
    \label{illustration_SDA}
    \end{center}
    \vskip -0.32in
\end{figure}


\begin{figure*}[t]
    \begin{center}
    \centerline{\includegraphics[width=2\columnwidth]{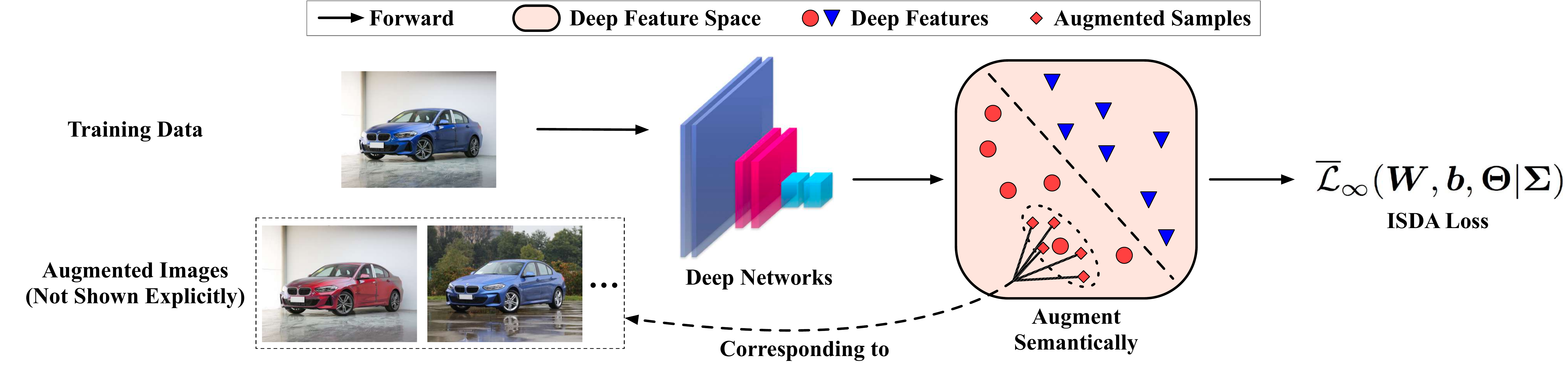}}
    \vskip -0.15in
    \caption{An overview of ISDA. Inspired by the observation that certain directions in the feature space correspond to meaningful semantic transformations, we augment the training data semantically by translating their features along these semantic directions, without involving auxiliary deep networks. The directions are obtained by sampling random vectors from a zero-mean normal distribution with dynamically estimated class-conditional covariance matrices. 
    In addition, instead of performing augmentation explicitly, ISDA boils down to minimizing a closed-form upper-bound of the expected cross-entropy loss on the augmented training set, which makes our method highly efficient. 
    }
    \label{overview}
    \end{center}
    \vskip -0.2in
\end{figure*}

In this paper, we propose an implicit semantic data augmentation (ISDA) algorithm for training deep networks. The ISDA is highly efficient as it does not require training/inferring auxiliary networks or explicitly generating extra training samples. Our approach is motivated by the intriguing observation made by recent work showing that the features deep in a network are usually linearized \cite{Upchurch2017DeepFI, bengio2013better}. Specifically, there exist many semantic directions in the deep feature space, such that translating a data sample in the feature space along one of these directions results in a feature representation corresponding to another sample with the same class identity but different semantics. For example, a certain direction corresponds to the semantic translation of "make-bespectacled". When the feature of a person, who does not wear glasses, is translated along this direction, the new feature may correspond to the same person but with glasses (The new image can be explicitly reconstructed using proper algorithms as shown in \cite{Upchurch2017DeepFI}). Therefore, by searching for many such semantic directions, we can effectively augment the training set in a way complementary to traditional data augmenting techniques.

However, explicitly finding semantic directions is not a trivial task, which usually requires extensive human annotations \cite{Upchurch2017DeepFI}. In contrast, sampling directions randomly is efficient but may result in meaningless transformations. For example, it makes no sense to apply the "make-bespectacled" transformation to the ``car'' class. In this paper, we adopt a simple method that achieves a good balance between effectiveness and efficiency. In specific, we perform an online estimate of the covariance matrix of the features for \emph{each} class, which captures the intra-class variations. Then we sample directions from a zero-mean multi-variate normal distribution with the estimated covariance and apply them to the features of training samples in that class to augment the dataset. In this way, the chance of generating meaningless semantic transformations can be significantly reduced.

To further improve the efficiency, we derive a closed-form upper bound of the \emph{expected} cross-entropy (CE) loss with the proposed data augmentation scheme. Therefore, instead of performing the augmentation procedure explicitly, we can directly minimize the upper bound, which is, in fact, a novel robust surrogate loss function. As there is no need to generate explicit data samples, we call our algorithm \emph{implicit semantic data augmentation (ISDA)}. Compared to existing semantic data augmentation algorithms, the proposed ISDA is able to be conveniently implemented on top of most deep models without introducing auxiliary models or noticeable extra computational cost.

In addition to supervised learning tasks, we further apply the proposed ISDA algorithm to more realistic semi-supervised learning scenarios, where only a small subset of all available training data are associated with labels \cite{rasmus2015semi, kingma2014semi,miyato2018virtual, tarvainen2017mean, laine2016temporal}. For samples with labels, we simply minimize the aforementioned upper bound as the surrogate loss. For unlabeled samples, as it is unable to obtain the surrogate loss for ISDA directly, we propose to enforce their semantic consistency. To be specific, since ISDA performs class identity preserving semantic transformations, which should not affect the model prediction on categories, we augment the deep features of unlabeled data, and minimize the KL-divergence between the predictions of the augmented features and the original features. Similarly, an upper bound of the expected KL-divergence is derived as the optimization objective. ISDA can be implemented together with state-of-the-art deep semi-supervised learning algorithms and significantly improves their performance.




Although being simple, the proposed ISDA algorithm is surprisingly effective. Extensive empirical evaluations including supervised / semi-supervised image classification on CIFAR, SVHN and ImageNet and semantic segmentation on Cityscapes are conducted. Results show that ISDA consistently improves the generalization performance of popular deep networks and enable the models to learn better representations. 






Parts of the results in this paper were published originally in its conference version \cite{wang2019implicit}. However, this paper extends our earlier work in several important aspects:
\begin{itemize}
    \item We extend the proposed ISDA algorithm to deep semi-supervised learning, and empirically validate it on widely used image classification benchmarks (Section \ref{semi_ISDA} \& \ref{semi_ISDA_result}).
    \item We present more results on ImageNet (Table \ref{ImageNet Results}) with different deep networks (i.e. ResNets, ResNeXts and DenseNets). 
    \item We further apply our algorithm to the semantic segmentation task on Cityscapes (Table \ref{Segmentation}), and report positive results.
    \item An analysis of the computational complexity is given (Section \ref{Complexity}), showing that ISDA introduces negligible computational overhead theoretically. We also report the additional time consumption of ISDA in practice (Section \ref{Computational_Cost_ISDA}).
    \item Additional analytical results including a visualization on ImageNet, a t-SNE visualization of deep features, a sensitivity test and an analysis of the tightness of the upper bound are presented (Section \ref{Analytical_results}).
\end{itemize}

\section{Related Work}

In this section, we briefly review existing research on related topics.

\textbf{Data augmentation} is a widely used technique to regularize deep networks.
For example, in image recognition tasks, augmentation methods like random flipping, mirroring and rotation are applied to enforce the geometric invariance of convolutional networks \cite{He_2016_CVPR, 2016arXiv160806993H, 2014arXiv1409.1556S, srivastava2015training}. These classic techniques are fundamental to obtain highly generalized deep models.
It is shown in some literature that abandoning certain information in training images is also an effective approach to augment the training data. Cutout \cite{devries2017improved} and random erasing \cite{zhong2017random} randomly cut a rectangle region of an input image out to perform augmentation. In addition, several studies focus on automatic data augmentation techniques. E.g., AutoAugment \cite{2018arXiv180509501C} is proposed to search for a better augmentation strategy among a large pool of candidates using reinforcement learning. A key concern on AutoAugment is that the searching algorithm suffers from extensive computational and time costs. Similar to our method, learning with marginalized corrupted features \cite{maaten2013learning} can be viewed as an implicit data augmentation technique, but it is limited to simple linear models. Feature transfer learning \cite{yin2019feature} explicitly augments the under-represented data in the feature space, while it merely focuses on the imbalance face images. Complementarily, recent research shows that semantic data augmentation techniques which apply class identity preserving transformations (e.g. changing backgrounds of objects or varying visual angles) to the training data are effective as well \cite{jaderberg2016reading, bousmalis2017unsupervised, NIPS2017_6916, antoniou2017data}. This is usually achieved by generating extra semantically transformed training samples with specialized deep structures such as DAGAN \cite{antoniou2017data}, domain adaptation networks \cite{bousmalis2017unsupervised}, or other GAN-based generators \cite{jaderberg2016reading, NIPS2017_6916}. Although being effective, these approaches are nontrivial to implement and computationally expensive, due to the need to train generative models beforehand and infer them during training. 


\textbf{Robust loss function. }
As shown in the paper, ISDA amounts to minimizing a novel robust loss function. Therefore, we give a brief review of related work on this topic. 
Recently, several robust loss functions are proposed to improve the generalization performance of deep networks.
For example, the L$_q$ loss \cite{Zhang2018GeneralizedCE} is a balanced form between the cross entropy (CE) loss and mean absolute error (MAE) loss, derived from the negative Box-Cox transformation. It is designed to achieve the robustness against corrupted labels in the training set, but also achieves effective performance to improve the generalization performance. Focal loss \cite{Lin2017FocalLF} attaches high weights to a sparse set of hard examples to prevent the vast number of easy samples from dominating the training of the network. The idea of introducing a large decision margin for CE loss has been studied in \cite{liu2016large, Liang2017SoftMarginSF, Wang2018EnsembleSS}. These researches propose to maximize the cosine distance between deep features of samples from different classes, in order to alleviate the overfitting brought by the distribution gap between the training data and the real distribution. In \cite{Sun2014DeepLF}, the CE loss and the contrastive loss are combined to learn more discriminative features. From a similar perspective, center loss \cite{wen2016discriminative} simultaneously learns a center for deep features of each class and penalizes the distances between the samples and their corresponding class centers in the feature space, enhancing the intra-class compactness and inter-class separability.

\textbf{Semantic transformations via deep features. }
Our work is inspired by the fact that high-level representations learned by deep convolutional networks can potentially capture abstractions with semantics \cite{Bengio2009DeepArchitechture, bengio2013better}. 
In fact, translating deep features along certain directions has been shown to be corresponding to performing meaningful semantic transformations on the input images. For example, deep feature interpolation \cite{Upchurch2017DeepFI} leverages linear interpolations of deep features from a pre-trained neural network to edit the semantics of images. Variational Autoencoder (VAE) and Generative Adversarial Network (GAN) based methods \cite{Choi2018StarGANUG, zhu2017unpaired, He2018AttGANFA} establish a latent representation corresponding to the abstractions of images, which can be manipulated to perform semantic transformations. Generally, these methods reveal that there exist some semantically meaningful directions in the deep feature space, which can be leveraged to perform semantic data augmentation efficiently.

\textbf{Uncertainty modeling. }
Similar to us, some of previous works on deep learning with uncertainty \cite{kendall2017uncertainties, gal2015bayesian, gal2016dropout} also assume a Gaussian distribution for the deep feature or prediction of each sample. For example, in the context of face recognition and person re-identification, the probabilistic representations are leveraged to address the issues of ambiguous faces \cite{shi2019probabilistic} and data outliers/label noises \cite{yu2019robust}. In multi-task learning, the homoscedastic task uncertainty is used to learn the weights of different tasks \cite{kendall2018multi}. This technique is also exploited to object detection to model the uncertainty of bounding boxes \cite{he2019bounding}. Given that the proposed ISDA algorithm aims at augmenting training data semantically, our motivation is fundamentally different from these works. In addition, ISDA involves novel techniques such as estimating class-conditional covariance matrices and the derivation of the surrogate loss.


\textbf{Deep semi-supervised learning. }
Since ISDA can also be applied to semi-supervised learning tasks, we also briefly review recent work in this field. For modern deep learning, the precise annotations of a sufficiently large training set are usually expensive and time-consuming to acquire. To save the cost of annotation, a nice solution is training models on a small set of labeled data together with a large number of unlabeled samples, which is named semi-supervised learning.
The main methods on this topic can be divided into two sorts, teacher-based methods and perturbation-based methods. The former establish a `teacher model' to provide supervision for unlabeled data. 
For example, temporal ensemble \cite{laine2016temporal} uses the moving averaged prediction of the model on unannotated samples as pseudo labels. Mean teacher \cite{tarvainen2017mean} performs an exponential moving average on the parameters of models to obtain a teacher network. On the other hand, perturbation-based methods add small perturbations to the input images and enforce the prediction consistency of the network between the perturbed images and original images. VAT \cite{miyato2018virtual} proposes to apply adversarial perturbations. $\Pi$-model \cite{laine2016temporal} minimizes the mean-square distance between the same image with different augmentation schemes. As an augmentation technique, the proposed semi-supervised ISDA algorithm is complementary to both the two types of methods.

\section{Semantic Transformations in Deep Feature Space}
\label{Semantic Transformations in Deep Feature Space}
Deep networks have been known to excel at extracting high-level representations in the deep feature space \cite{He_2016_CVPR, 2016arXiv160806993H, Upchurch2017DeepFI, ren2015faster}, where the semantic relationships between samples can be captured by the spatial positions of their deep features \cite{bengio2013better}. It has been shown in previous work that translating deep features towards certain directions corresponds to meaningful semantic transformations when the features are mapped back to the input space \cite{Upchurch2017DeepFI,Li2016ConvolutionalNF, bengio2013better}. As a matter of fact, such an observation can be leveraged to edit the semantics of images without the help of deep architectures. An example is shown in Figure \ref{linearizing}. Consider feeding an image of a blue car into the deep network and obtaining its deep feature. Then if we translate the deep feature along the directions corresponding to `change-color' or `change-background', we will get the deep features corresponding to the images of the same car but with a red paint or under a different background. 

\begin{figure}[t]
    \begin{center}
    \centerline{\includegraphics[width=\columnwidth]{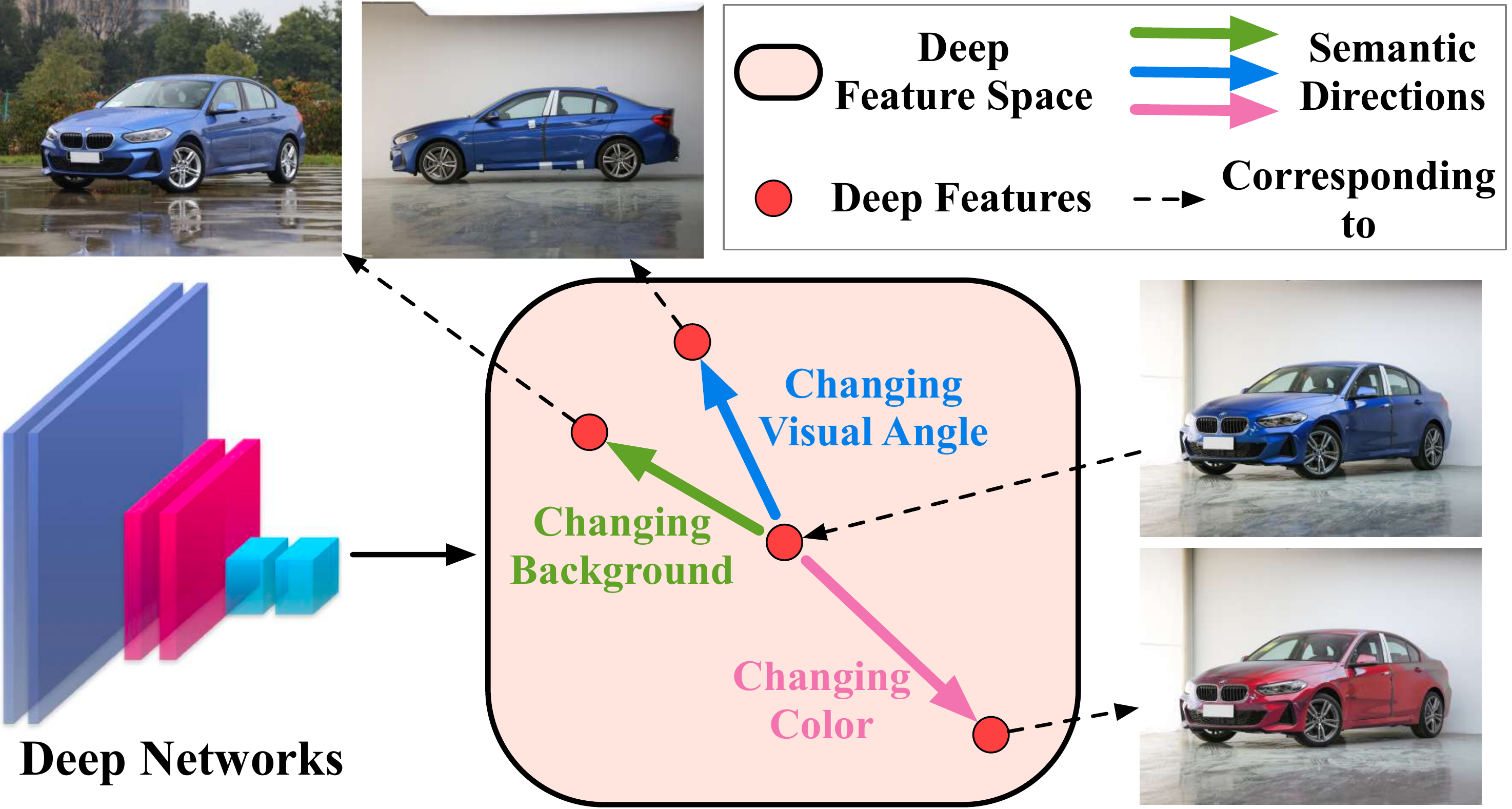}}  
    \vspace{-1ex}
    \caption{
        An illustration of the insight from deep feature interpolation \cite{Upchurch2017DeepFI} and other existing works \cite{Li2016ConvolutionalNF, bengio2013better}, which inspires our method. Transformations like `changing the color of the car' or `changing the background of the image' can be realized by linearly translating the deep features towards the semantic directions corresponding to these transformations.}
    \label{linearizing}
    \end{center}
    \vskip -0.2in
\end{figure}

\begin{figure*}[t]
    \vskip 0.1in
    \begin{center}
    \centerline{\includegraphics[width=2\columnwidth]{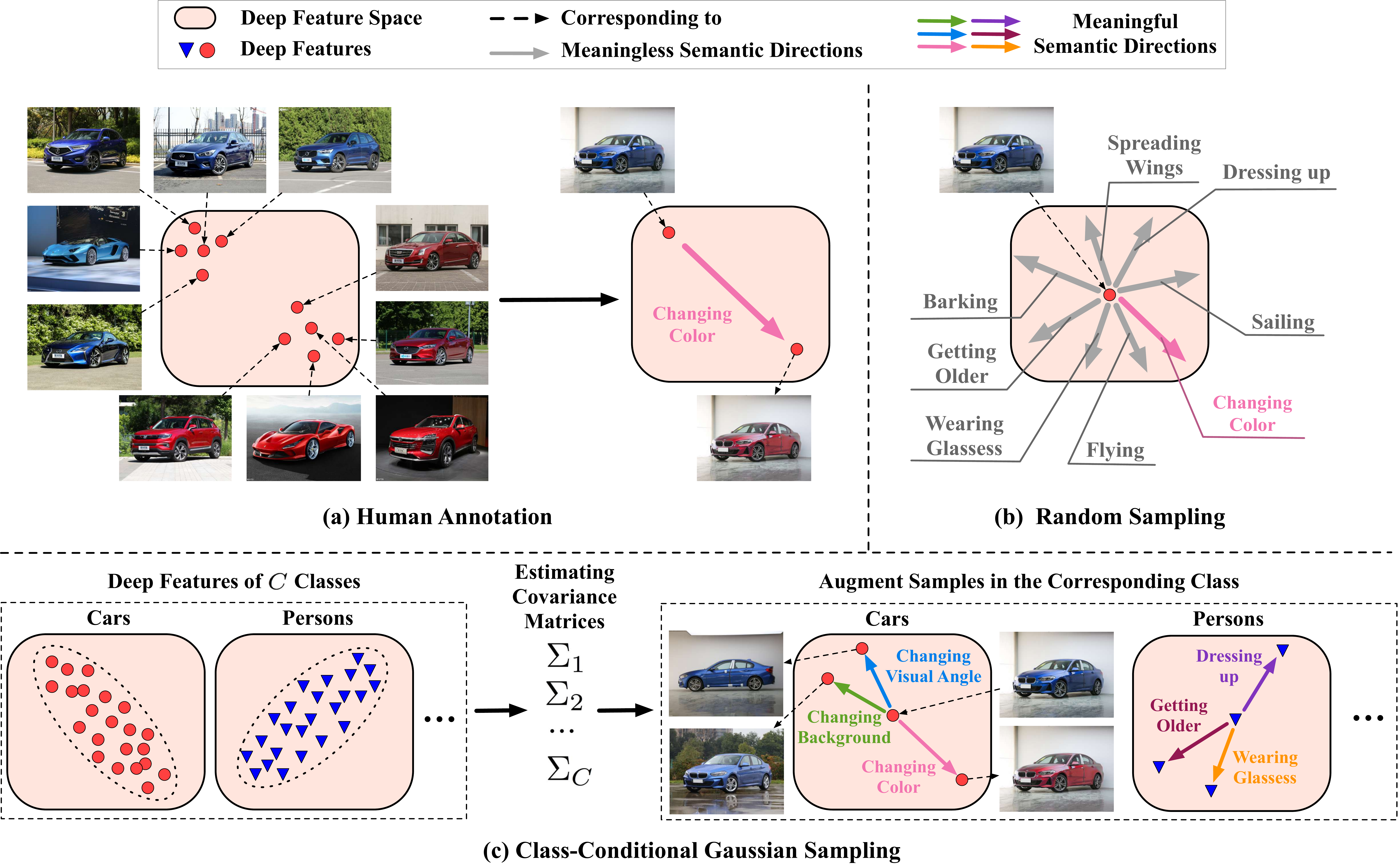}}
    \vskip -0.15in
    \caption{Three different ways to obtain semantic directions for augmentation in the deep feature space. Human annotation is the most precise way. But it requires collecting annotated images for each transformation of each class in advance, which is expensive and time-consuming. In addition, it will inevitably omit potential augmentation transformations. In contrast, finding semantic directions by random sampling is highly efficient, but yields a large number of meaningless transformations. To achieve a nice trade-off between effectiveness and efficiency, we propose to estimate a covariance matrix for the deep features of each class, and sample semantic directions from a zero-mean normal distribution with the estimated class-conditional covariance matrix. The covariance matrix captures the intra-class feature distribution of the training data, and therefore contains rich information of potential semantic transformations.
    }
    \label{Semantic_directions}
    \end{center}
    \vskip -0.225in
\end{figure*}

Based on this intriguing property, we propose to directly augment the semantics of training data by translating their corresponding deep features along many meaningful semantic directions. 
Our method is highly efficient compared with traditional approaches of performing semantic augmentation. Conventionally, to achieve semantic changes, one needs to train, deploy and infer deep generators such as cycle-GAN \cite{zhu2017unpaired} or W-GAN \cite{arjovsky2017wasserstein}. The procedure is both computationally expensive and time-consuming. In contrast, translating deep features just introduces the negligible computational cost of linear interpolation. 

One may challenge that although semantic transformations are efficient to be realized via deep features, showing the results out in the pixel space is difficult \cite{Upchurch2017DeepFI}. However, our goal is not to edit the semantic contents and obtain the results, but to train deep networks with these semantically altered images for the data augmentation purpose. 
Since the augmented features can be directly used for training, it is not necessary to explicitly show the semantic transformations we perform. 
In the following, we will show that our method integrates the augmentation procedure into the training process of deep networks.

\section{Implicit Semantic Data Augmentation (ISDA)}

As aforementioned, certain directions in the deep feature space correspond to meaningful semantic transformations like `make-bespectacled' or `change-visual-angle'. By leveraging this observation, we propose an implicit semantic data augmentation (ISDA) approach to augment the training set semantically via deep features.
Our method has two important components, i.e., online estimation of class-conditional covariance matrices and optimization with a robust loss function. The first component aims to find a distribution from which we can sample meaningful semantic transformation directions for data augmentation, while the second saves us from explicitly generating a large amount of extra training data, leading to remarkable efficiency compared to existing data augmentation techniques.

\subsection{Semantic Direction Sampling}
\label{Semantic Directions Sampling}
A challenge faced by our method is how to obtain suitable semantic directions for augmentation. The directions need to correspond to the semantic transformations that are meaningful for the main object in the image, while do not change the class identity of the image. For example, transformations like wearing glasses or dressing up are suitable to augment the images of persons, but others like flying or sailing are meaningless. In addition, it is obvious that persons in the images should not be transformed to horses or other objects that do not belong to their original class.

Previous work \cite{Upchurch2017DeepFI} proposes to find semantic directions by human annotation. Their method is shown in Figure \ref{Semantic_directions} (a). Take changing the color of a car from blue to red for example. Firstly, they collect two sets of images of blue cars and red cars, respectively, and fed them into deep networks to obtain their deep features. Then they take the vector from the average feature of blue cars to the average feature of red cars. The vector corresponds to the transformation of `changing the color of the car from blue to red'. At last, for a new image to transform, they translate its deep feature along the vector, and map the feature back to the pixel space.
It has been shown that their method is able to perform the specific transformation precisely \cite{Upchurch2017DeepFI}.
Whereas, human annotation is not a feasible approach in the context of semantic data augmentation. For one thing, one needs to collect sufficient annotated images for each possible transformation of each class. This procedure is inefficient. For another, it is difficult to pre-define all possible semantic transformations for each class. The omission will lead to inferior performance.

In terms of efficiency, a possible solution is to obtain semantic directions by random sampling. However, since the deep feature space is highly sparse (e.g., ResNets \cite{He_2016_CVPR} generate 64-dimensional features on CIFAR. Even if each dimension has two possible values, there will be $2^{64}$ possible features.), sampling totally at random will yield many meaningless semantic directions. As shown in Figure \ref{Semantic_directions} (b), transformations like `getting older' or `flying' may be performed for a car. 

To achieve a nice trade-off between the effectiveness and efficiency, we propose to approximate the procedure of human annotation by sampling random vectors from a zero-mean normal distribution with a covariance that is proportional to the intra-class covariance matrix of samples to be augmented. 
The covariance matrix captures the variance of samples in that class and is thus likely to contain rich semantic information. Intuitively, features of the \emph{person} class may vary along the `wearing glasses' direction, as both images of persons with glasses and persons without glasses are contained in the training set. In contrast, the variance along the `having propeller' direction will be nearly zero as all persons do not have propellers. Similarly, features of the \emph{plane} class may vary along the `having propeller' direction, but will have nearly zero variance along the `wearing glasses' direction.
We hope that directions corresponding to meaningful transformations for each class are well represented by the principal components of the covariance matrix of that class.
In addition to its efficiency, the proposed approach can actually leverage more potential semantic transformations than human annotation, as the obtained semantic directions are continuously distributed in the deep feature space.

Consider training a deep network $G$ with weights $\bm{\Theta}$ on a training set 
$\mathcal{D} = \{(\bm{x}_{i}, y_{i})\}$, where $y_{i} \in \{ 1, \ldots, C \}$ is the label of the $i^{\textnormal{th}}$ sample $\bm{x}_{i}$ over $C$ classes.  Let the $A$-dimensional vector $\bm{a}_{i} = [a_{i1}, \ldots, a_{iA}]^{\textnormal{T}} = G(\bm{x}_{i}, \bm{\Theta})$ denote the deep feature of $\bm{x}_{i}$ learned by $G$, and $a_{ij}$ indicate the $j^{\textnormal{th}}$ element of $\bm{a}_{i}$. 


To obtain semantic directions to augment $\bm{a}_{i}$, we establish a zero-mean multi-variate normal distribution $\mathcal{N}(0, \Sigma_{y_i})$, where $\Sigma_{y_i}$ is the class-conditional covariance matrix estimated from the features of all the samples in class $y_i$. In implementation, the covariance matrix is computed in an online fashion by aggregating statistics from all mini-batches. 
Formally, the online estimation algorithm for the covariance matrices is given by:
\begin{equation}
    \label{ave}
    \bm{\mu}_j^{(t)} = \frac{n_j^{(t-1)}\bm{\mu}_j^{(t-1)} + m_j^{(t)} {\bm{\mu}'}_j^{(t)}}
    {n_j^{(t-1)} +m_j^{(t)}},
\end{equation}
\vskip -0.1in
\begin{equation}
    \label{cv}
    \begin{split}
        \Sigma_j^{(t)} 
         = &\frac{n_j^{(t-1)}\Sigma_j^{(t-1)}\!+\!m_j^{(t)} {\Sigma'}_j^{(t)}}
        {n_j^{(t-1)} +m_j^{(t)}}   \\
        & + \frac{n_j^{(t-1)}m_j^{(t)} (\bm{\mu}_j^{(t-1)}\!-\!{\bm{\mu}'}_j^{(t)})
        (\bm{\mu}_j^{(t-1)}\!-\!{\bm{\mu}'}_j^{(t)})^{\textnormal{T}}}
        {(n_j^{(t-1)} +m_j^{(t)})^2},
    \end{split}
\end{equation}
\begin{equation}
    \label{sum}
    n_j^{(t)} = n_j^{(t-1)} + m_j^{(t)},
\end{equation}
where $\bm{\mu}_j^{(t)}$ and $\Sigma_j^{(t)}$ are the estimates of average values and covariance matrices of the features of $j^{\textnormal{th}}$ class at $t^{\textnormal{th}}$ step. ${\bm{\mu}'}_j^{(t)}$ and ${\Sigma'}_j^{(t)}$ are the average values and covariance matrices of the features of $j^{\textnormal{th}}$ class in $t^{\textnormal{th}}$ mini-batch. $n_j^{(t)}$ denotes the total number of training samples belonging to $j^{\textnormal{th}}$ class in all $t$ mini-batches,
and $m_j^{(t)}$ denotes the number of training samples belonging to $j^{\textnormal{th}}$ class only in $t^{\textnormal{th}}$ mini-batch.

During training, $C$ covariance matrices are computed, one for each class. The augmented feature $\tilde{\bm{a}}_{i}$ is obtained by translating $\bm{a}_{i}$ along a random direction sampled from $\mathcal{N}(0, \lambda\Sigma_{y_i})$. Equivalently, we have:
\begin{equation}
    \tilde{\bm{a}}_{i} \sim \mathcal{N}(\bm{a}_{i}, \lambda\Sigma_{y_i}),
\end{equation}
where $\lambda$ is a positive coefficient to control the strength of semantic data augmentation. As the covariances are computed dynamically during training, the estimation in the first few epochs is not quite informative when the network is not well trained. To address this issue, we let $\lambda  = (t/T)\!\times\!\lambda_0$ be a function of the current iteration $t$, thus to reduce the impact of the estimated covariances on our algorithm early in the training stage.

\subsection{Upper Bound of the Expected Loss}
\label{sec_4_2}
A naive method to implement the semantic data augmentation is to explicitly augment each $\bm{{a}}_{i}$ for $M$ times, forming an augmented feature set $\{(\bm{a}_{i}^{1}, y_{i}), \ldots, (\bm{a}_{i}^{M}, y_{i})\}_{i=1}^{N}$ of size $MN$, where $\bm{a}_{i}^{m}$ is $m^{\textnormal{th}}$ sample of augmented features for sample $\bm{x}_i$. 
Then the networks are trained by minimizing the cross-entropy (CE) loss:
\begin{equation}
    \label{eq2}
    \mathcal{L}_{M}(\bm{W}, \bm{b}, \bm{\Theta})\!=\!
    \frac{1}{N}\!
    \sum_{i=1}^{N}\!\frac{1}{M}\!\sum_{m=1}^{M}
    -\log (\frac{e^{\bm{w}^{\textnormal{T}}_{y_{i}}\bm{a}_{i}^{m}+ b_{y_{i}}}}
    {\sum_{j=1}^{C}e^{\bm{w}^{\textnormal{T}}_{j}\bm{a}_{i}^{m} + b_{j}}}),
\end{equation}
where $\bm{W} = [\bm{w}_{1},\dots, \bm{w}_{C}]^{\textnormal{T}} \in \mathcal{R}^{C \times A}$ and $\bm{b} = [b_{1},\dots, b_{C}]^{\textnormal{T}} \in \mathcal{R}^C$ are the weight matrix and biases corresponding to the final fully connected layer, respectively. 

Obviously, the naive implementation is computationally inefficient when $M$ is large, as the feature set is enlarged by $M$ times. In the following, we consider the case that $M$ grows to infinity, and find that an easy-to-compute upper bound can be derived for the loss function, leading to a highly efficient implementation.



In the case $M\rightarrow\infty$, we are in fact considering the expectation of the CE loss under all possible augmented features. Specifically, $\mathcal{L}_{\infty}$ is given by:
\begin{equation}
    \label{expectation}
    \mathcal{L}_{\infty}(\bm{W}, \bm{b}, \bm{\Theta}|\bm{\Sigma})
\!=\!\frac{1}{N}\!\sum_{i=1}^{N}\mathrm{E}_{\tilde{\bm{a}}_{i}}[
    -\log(
        \frac{e^{\bm{w}^{\textnormal{T}}_{y_{i}}\tilde{\bm{a}}_{i}+ b_{y_{i}}}}
    {\sum_{j=1}^{C}e^{\bm{w}^{\textnormal{T}}_{j}\tilde{\bm{a}}_{i} + b_{j}}}
    )].
\end{equation}
If $\mathcal{L}_{\infty}$ can be computed efficiently, then we can directly minimize it without explicitly sampling augmented features. However, Eq. (\ref{expectation}) is difficult to compute in its exact form. Alternatively, we find that it is possible to derive an easy-to-compute upper bound for $\mathcal{L}_{\infty}$, as given by the following proposition.

\begin{proposition}
        \label{proposition}
    Suppose that $\tilde{\bm{a}}_{i} \sim \mathcal{N}(\bm{a}_{i}, \lambda\Sigma_{y_i})$. Then we have an upper bound of $\mathcal{L}_{\infty}$, given by:
    \begin{equation}
        \label{proposition_1}
        \begin{split}
            {\mathcal{L}}_{\infty}
         \leq  \frac{1}{N} \! \sum_{i=1}^{N} \! - \log(\!
              \frac{e^{
             \bm{w}^{\textnormal{T}}_{y_{i}}\bm{a}_{i} + b_{y_{i}}
         }}{\sum_{j=1}^{C}\!e^{
            \bm{w}^{\textnormal{T}}_{j}\bm{a}_{i}+b_{j}+\frac{\lambda}{2}\bm{v}^{\textnormal{T}}_{jy_{i}}\!\Sigma_{y_i}\!\bm{v}_{jy_{i}}
         }}\!) \triangleq \overline{\mathcal{L}}_{\infty},
        \end{split}
    \end{equation}
    where $\bm{v}_{jy_{i}} = \bm{w}_{j} - \bm{w}_{y_{i}}$.
    \end{proposition}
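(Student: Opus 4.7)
The plan is to rewrite the expected negative log-softmax in log-sum-exp form, apply Jensen's inequality to push the expectation inside the logarithm, and then exploit the closed-form Gaussian moment generating function to evaluate the resulting expectation exactly.

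First, I would rearrange the summand of $\mathcal{L}_\infty$ by dividing numerator and denominator of the softmax by $e^{\bm{w}^{\textnormal{T}}_{y_i}\tilde{\bm{a}}_i + b_{y_i}}$, so that using $\bm{v}_{jy_i} = \bm{w}_j - \bm{w}_{y_i}$ the per-sample loss becomes $\log\sum_{j=1}^{C} e^{\bm{v}^{\textnormal{T}}_{jy_i}\tilde{\bm{a}}_i + (b_j - b_{y_i})}$. This form isolates the dependence on $\tilde{\bm{a}}_i$ inside exponentials that are linear in $\tilde{\bm{a}}_i$, which is precisely what we need to handle the Gaussian expectation.

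Next, because $\log(\cdot)$ is concave, Jensen's inequality yields
\begin{equation*}
\mathrm{E}_{\tilde{\bm{a}}_i}\!\left[\log\sum_{j=1}^{C} e^{\bm{v}^{\textnormal{T}}_{jy_i}\tilde{\bm{a}}_i + (b_j - b_{y_i})}\right] \;\leq\; \log \sum_{j=1}^{C} e^{b_j - b_{y_i}}\,\mathrm{E}_{\tilde{\bm{a}}_i}\!\left[e^{\bm{v}^{\textnormal{T}}_{jy_i}\tilde{\bm{a}}_i}\right],
\end{equation*}
after swapping expectation and sum by linearity. Since $\tilde{\bm{a}}_i \sim \mathcal{N}(\bm{a}_i, \lambda\Sigma_{y_i})$, the scalar $\bm{v}^{\textnormal{T}}_{jy_i}\tilde{\bm{a}}_i$ is univariate Gaussian with mean $\bm{v}^{\textnormal{T}}_{jy_i}\bm{a}_i$ and variance $\lambda\bm{v}^{\textnormal{T}}_{jy_i}\Sigma_{y_i}\bm{v}_{jy_i}$, so its moment generating function at $1$ gives $\mathrm{E}_{\tilde{\bm{a}}_i}[e^{\bm{v}^{\textnormal{T}}_{jy_i}\tilde{\bm{a}}_i}] = \exp\!\big(\bm{v}^{\textnormal{T}}_{jy_i}\bm{a}_i + \tfrac{\lambda}{2}\bm{v}^{\textnormal{T}}_{jy_i}\Sigma_{y_i}\bm{v}_{jy_i}\big)$.

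Finally, I would substitute this closed form back in, multiply numerator and denominator by $e^{\bm{w}^{\textnormal{T}}_{y_i}\bm{a}_i + b_{y_i}}$ to restore the original softmax structure, and obtain exactly the right-hand side $\overline{\mathcal{L}}_\infty$ of Proposition~\ref{proposition}. Averaging over $i$ preserves the inequality and completes the proof. The only nontrivial step is the Jensen inequality bound, which is where the looseness of the bound lives; the remaining manipulations are algebraic identities and the standard Gaussian MGF, so I do not anticipate any obstacle beyond being careful that the quadratic form $\bm{v}^{\textnormal{T}}_{jy_i}\Sigma_{y_i}\bm{v}_{jy_i}$ is nonnegative (which it is, since $\Sigma_{y_i}$ is positive semidefinite), ensuring the exponent is well-defined.
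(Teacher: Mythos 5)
Your proposal is correct and follows essentially the same route as the paper's proof: rewriting the per-sample loss as $\log\sum_j e^{\bm{v}^{\textnormal{T}}_{jy_i}\tilde{\bm{a}}_i + (b_j - b_{y_i})}$, applying Jensen's inequality to the concave logarithm, and evaluating the resulting expectation via the Gaussian moment generating function of the linear form $\bm{v}^{\textnormal{T}}_{jy_i}\tilde{\bm{a}}_i$. No gaps to report.
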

    \begin{proof}
        According to the definition of $\mathcal{L}_{\infty}$ in Eq. (\ref{expectation}), we have:
    \begin{align}
        \label{Js_1}
        \mathcal{L}_{\infty} 
        = &  \frac{1}{N} \sum_{i=1}^{N}\mathrm{E}_{\tilde{\bm{a}}_{i}}[
            \log(
            \sum_{j=1}^{C}
        e^{\bm{v}^{\textnormal{T}}_{jy_{i}}\tilde{\bm{a}}_{i} + (b_{j} - b_{y_{i}})}
            )] \\
            \label{Js_3}
        \leq &  \frac{1}{N}
        \sum_{i=1}^{N} 
        \log(\sum_{j=1}^{C}\mathrm{E}_{\tilde{\bm{a}}_{i}}
        [e^{\bm{v}^{\textnormal{T}}_{jy_{i}}\tilde{\bm{a}}_{i} + (b_{j} - b_{y_{i}})}]) \\
        \label{Js_4}
          = & \frac{1}{N}
        \sum_{i=1}^{N}
        \log(\sum_{j=1}^{C}e^{\bm{v}^{\textnormal{T}}_{jy_{i}}\bm{a}_{i}\!+\!(b_{j}\!-\!b_{y_{i}})
        \!+\!\frac{\lambda}{2}\bm{v}^{\textnormal{T}}_{jy_{i}}\Sigma_{y_i}\bm{v}_{jy_{i}}}
        )  
        \\
        \label{Js_5}
        = & \overline{\mathcal{L}}_{\infty}.
    \end{align}
    In the above, the Inequality (\ref{Js_3}) follows from the Jensen's inequality $\mathrm{E}[\log \!X] \leq \log\mathrm{E}[X]$, as the logarithmic function $\log(\cdot)$ is concave. The Eq. (\ref{Js_4}) is obtained by leveraging the moment-generating function:
    \begin{equation*}
        \mathrm{E}[e^{tX}] = e^{t\mu + \frac{1}{2} \sigma^2 t^2},\ \ X \sim \mathcal{N}(\mu,\sigma^2),
    \end{equation*}
    due to the fact that $\bm{v}^{\textnormal{T}}_{jy_{i}}\tilde{\bm{a}}_{i}\!+\!(b_{j}\!-\!b_{y_{i}})$ is a Gaussian random variable, i.e.,
    \begin{equation*}
        \label{proof_1}
        \begin{split}
            \bm{v}^{\textnormal{T}}_{jy_{i}}\tilde{\bm{a}}_{i}\!+\!(b_{j}\!-\!b_{y_{i}}) \sim 
             \mathcal{N}(\bm{v}^{\textnormal{T}}_{jy_{i}}\bm{a}_{i}\!+\!(b_{j}\!-\!b_{y_{i}}), 
            \lambda\bm{v}^{\textnormal{T}}_{jy_{i}}\Sigma_{y_i}\bm{v}_{jy_{i}}).
        \end{split}
         \qedhere
    \end{equation*}
    
    \end{proof}

Essentially, Proposition \ref{proposition} provides a surrogate loss for our implicit data augmentation algorithm. Instead of minimizing the exact loss function $\mathcal{L}_{\infty}$, we can optimize its upper bound $\overline{\mathcal{L}}_{\infty}$ in a much more efficient way. Therefore, the proposed ISDA boils down to a novel robust loss function, which can be easily adopted by most deep models.
In addition, we can observe that when $\lambda \rightarrow 0$, which means no features are augmented,
$\overline{\mathcal{L}}_{\infty}$ reduces to the standard CE loss.

In summary, the proposed ISDA approach can be simply plugged into deep networks as a robust loss function, and efficiently optimized with the stochastic gradient descent (SGD) algorithm. We present the pseudo code of ISDA in Algorithm \ref{alg}. 
\begin{center}
    \vskip -0.1in
        \begin{algorithm}[H]
            \caption{The ISDA algorithm.}
            \label{alg}
        \begin{algorithmic}[1]
            \STATE {\bfseries Input:} $\mathcal{D}$, $\lambda_0$
            \STATE Randomly initialize
            $\bm{W}, \bm{b}$ and $\bm{\Theta}$ 
            \FOR{$t=0$ {\bfseries to} $T$}
            \STATE Sample a mini-batch $\{ \bm{x}_i, {y_i} \}_{i=1}^B$  from $\mathcal{D}$
            \STATE Compute $\bm{a}_{i} = G(\bm{x}_{i}, \bm{\Theta})$
            \STATE Estimate the covariance matrices $\Sigma_{1}$, $\Sigma_{2}$, $...$, $\Sigma_{C}$
            \STATE Compute $\overline{\mathcal{L}}_{\infty}$
            according to Eq. (\ref{proposition_1})
            \STATE Update $\bm{W}, \bm{b}$, $\bm{\Theta}$ with SGD
            \ENDFOR
            \STATE {\bfseries Output:} $\bm{W}, \bm{b}$ and $\bm{\Theta}$
            
        \end{algorithmic}
        \end{algorithm}
\end{center}

\subsection{Complexity of ISDA}
\label{Complexity}
Here we present a theoretical analysis to show that ISDA does not involve notable additional computational cost. As shown above, ISDA requires extra computation for estimating the covariance matrices and computing the upper bound of the excepted loss. For a single sample, the computational complexity of the former is $O(D^2)$ (using the online update formulas Eqs. (\ref{ave})(\ref{cv})), while that of the later is $O(C \! \times \! D^2)$, where $D$ is the dimension of feature space. In comparison, a typical ConvNet with $L$ layers requires $O(D^2 \!\times\! K^2  \!\times\! H  \!\times\! W \! \times \! L)$ operations, where $K$ is the filter kernel size, and $H$ and $W$ are the height and width of feature maps. Consider ResNet-110 on CIFAR (C10 \& C100) as an example, for which we have $K\!=\!3$, $H\!=\!W\!=\!8$ and $L\!=\!109$ (ignoring the last FC-layer), then the extra computation cost of ISDA is up to \emph{three orders of magnitude less} than the total computation cost of the network. In our experiments, the results of both theoretical and practical cost of ISDA are provided in Table \ref{ComputationalCost}.


\section{ISDA for Deep Semi-supervised Learning}
\label{semi_ISDA}
Deep networks have achieved remarkable success in supervised learning tasks when fueled by sufficient annotated training data. 
However, obtaining abundant annotations is usually costly and time-consuming in practice. In comparison, collecting training samples without labels is a relatively easier task. The goal of deep semi-supervised learning is to improve the performance of deep networks by leveraging both labeled and unlabeled data simultaneously \cite{kingma2014semi, rasmus2015semi, laine2016temporal, tarvainen2017mean}. In this section, we further introduce how to apply the proposed algorithm to semi-supervised learning tasks. 


It is not straightforward to directly implement the aforementioned ISDA algorithm in semi-supervised learning, since unlabeled samples do not have ground truth labels, which are essential to compute the supervised ISDA loss $\overline{\mathcal{L}}_{\infty}$. Inspired by other consistency based semi-supervised learning methods \cite{tarvainen2017mean, laine2016temporal, miyato2018virtual, luo2018smooth}, we propose a semantic consistency training approach to exploit unlabeled data in ISDA. Our major insight here is that the prediction of a given sample should not be significantly changed when it is augmented, because ISDA performs class identity preserving semantic transformations. In specific, we first augment the deep features of unlabeled data, and then minimize the KL-divergence between the predictions of the augmented samples and the corresponding original samples. Interestingly, we find it feasible to derive a closed-form upper-bound of the expected KL-divergence as a surrogate loss, which makes our semi-supervised ISDA algorithm highly efficient,  similar to the case of supervised learning. ISDA can be incorporated into state-of-the-art deep semi-supervised learning algorithms to further improve their performance.

Consider training a deep network with weights $\bm{\Theta}$ on a labeled training set $\mathcal{D}^{\text{L}} = \{(\bm{x}_{i}^{\text{L}}, y_{i}^{\text{L}})\}$ and an unlabeled training set $\mathcal{D}^{\text{U}} = \{\bm{x}_{i}^{\text{U}}\}$. For labeled samples, we simply minimize the upper bound in Proposition \ref{proposition}. For unlabeled samples, given an input $\bm{x}_{i}^{\text{U}}$, we first obtain its deep feature $\bm{a}_{i}^{\text{U}}$ and the corresponding prediction $\bm{p}_i^{\text{U}} \in (0, 1)^C$. Then we obtain the augmented feature: 
\begin{equation}
    \tilde{\bm{a}}_{i}^{\text{U}} \sim \mathcal{N}(\bm{a}_{i}^{\text{U}}, \lambda\Sigma_{\tilde{y}_i^{\text{U}}}),\ \  \tilde{y}_i^{\text{U}} = \arg\max_{j} p_{ij}^{\text{U}},
\end{equation}
where $p_{ij}^{\text{U}}$ indicates the $j^{\textnormal{th}}$ element of $\bm{p}_i^{\text{U}}$ and $\tilde{y}_i^{\text{U}}$ is the pseudo label of $\bm{x}_{i}^{\text{U}}$. The covariance matrix $\Sigma_{\tilde{y}_i^{\text{U}}}$ is estimated using the deep features of labeled data.

Then the prediction of $\tilde{\bm{a}}_{i}^{\text{U}}$ is able to be calculated, denoted as $\tilde{\bm{p}}_i^{\text{U}}$. Since ISDA performs transformations that do not affect the class identity of samples, we enforce $\bm{p}_i^{\text{U}}$ and $\tilde{\bm{p}}_i^{\text{U}}$ to be similar by minimizing the KL-divergence between them. As $\tilde{\bm{a}}_{i}^{\text{U}}$ is a random variable, a straightforward approach to achieve that is to obtain $M$ samples of $\tilde{\bm{a}}_{i}^{\text{U}}$ and minimize the averaged KL-divergence of all the $M$ samples. However, as discussed in Section \ref{sec_4_2}, such a naive implementation is inefficient due to the enlarged feature set. To alleviate the problem, we consider the case where $M \to \infty$ and minimize the expected KL-divergence over $\tilde{\bm{a}}_{i}^{\text{U}}$:
\begin{equation}
    \min_{\bm{W}, \bm{b}, \bm{\Theta}}\mathrm{E}_{\tilde{\bm{a}}_{i}^{\text{U}}} [\textnormal{D}_{\text{KL}}(\bm{p}_i^{\text{U}}|\!|\tilde{\bm{p}}_i^{\text{U}})].
\end{equation}
Here, we treat $\bm{p}_i^{\text{U}}$ as a constant to stabilize the training procedure following \cite{miyato2018virtual, laine2016temporal, tarvainen2017mean}. Formally, a semantic consistency loss for unlabeled data is given by:
\begin{equation}
    \label{ul_loss}
    \begin{split}
        \mathcal{L}^{\text{U}}_{\infty}(& \bm{W}, \bm{b}, \bm{\Theta}|\bm{\Sigma})\!=\! \\
    & \frac{1}{N}\!\sum_{i=1}^{N}\!\sum_{k=1}^{C} p_{ik}^{\text{U}}
    \mathrm{E}_{\tilde{\bm{a}}_{i}}[
        -\log(
            \frac{e^{\bm{w}^{\textnormal{T}}_{k}\tilde{\bm{a}}_{i}+ b_{k}}}
        {\sum_{j=1}^{C}e^{\bm{w}^{\textnormal{T}}_{j}\tilde{\bm{a}}_{i} + b_{j}}}
        )].
    \end{split}
\end{equation}
It is difficult to compute Eq. (\ref{ul_loss}) in the exact form. Therefore, instead of directly using Eq. (\ref{ul_loss}) as the loss function, we show in the following proposition that a closed-form upper bound of $\mathcal{L}^{\text{U}}_{\infty}$ can be obtained as a surrogate loss. Similar to the supervised learning case, our semi-supervised ISDA algorithm amounts to minimizing a novel robust loss, and can be implemented efficiently.

\begin{proposition}
        \label{proposition_ul}
    Suppose that $\tilde{\bm{a}}_{i}^{\textnormal{U}} \sim \mathcal{N}(\bm{a}_{i}^{\textnormal{U}}, \lambda\Sigma_{\tilde{y}_i^{\textnormal{U}}})$. Then we have an upper bound of $\mathcal{L}^{\textnormal{U}}_{\infty}$, given by:
    \begin{equation}
        \label{proposition_1_ul}
        \begin{split}
            &\mathcal{L}^{\textnormal{U}}_{\infty}
         \leq \! \frac{1}{N}\!\!\sum_{i=1}^{N}\!\sum_{k=1}^{C}\!\!- p_{ik}^{\textnormal{U}}\log(\!
              \frac{e^{
             \bm{w}^{\textnormal{T}}_{k}\bm{a}_{i} + b_{k}
         }}{\sum_{j=1}^{C} \! e^{
            \bm{w}^{\textnormal{T}}_{j}\!\bm{a}_{i}\!+\!b_{j}\!+\!\frac{\lambda}{2}\!\bm{v}^{\textnormal{T}}_{jk} \! (\Sigma_{\tilde{y}_i^{\textnormal{U}}}) \bm{v}_{jk}
         }}\!) \triangleq \overline{\mathcal{L}}^{\textnormal{U}}_{\infty},
        \end{split}
    \end{equation}
    where $\bm{v}_{jk} = \bm{w}_{j} - \bm{w}_{k}$.
\end{proposition}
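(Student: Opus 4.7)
The plan is to mirror the proof of Proposition \ref{proposition} almost verbatim, with the label index $y_i$ replaced by an auxiliary class index $k$ that is summed over with nonnegative weights $p_{ik}^{\textnormal{U}}$. First I would fix a sample $i$ and write out the inner expectation $\mathrm{E}_{\tilde{\bm{a}}_{i}^{\textnormal{U}}}\bigl[-\log(e^{\bm{w}^{\textnormal{T}}_{k}\tilde{\bm{a}}_{i}^{\textnormal{U}}+ b_{k}}/\sum_{j}e^{\bm{w}^{\textnormal{T}}_{j}\tilde{\bm{a}}_{i}^{\textnormal{U}} + b_{j}})]$ for each fixed $k$. Using the same algebraic reorganization as in Eq.~(\ref{Js_1})—dividing numerator and denominator inside the log by $e^{\bm{w}^{\textnormal{T}}_{k}\tilde{\bm{a}}_{i}^{\textnormal{U}} + b_{k}}$—this becomes $\mathrm{E}_{\tilde{\bm{a}}_{i}^{\textnormal{U}}}\bigl[\log\!\sum_{j} e^{\bm{v}_{jk}^{\textnormal{T}}\tilde{\bm{a}}_{i}^{\textnormal{U}} + (b_j - b_k)}\bigr]$ with $\bm{v}_{jk} = \bm{w}_j - \bm{w}_k$.

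Next I would apply Jensen's inequality to swap the expectation and the concave $\log$, exactly as in Eq.~(\ref{Js_3}), to get an upper bound of the form $\log\!\sum_{j} \mathrm{E}_{\tilde{\bm{a}}_{i}^{\textnormal{U}}}\bigl[e^{\bm{v}_{jk}^{\textnormal{T}}\tilde{\bm{a}}_{i}^{\textnormal{U}} + (b_j - b_k)}\bigr]$. Since $\bm{v}_{jk}^{\textnormal{T}}\tilde{\bm{a}}_{i}^{\textnormal{U}} + (b_j - b_k)$ is an affine function of a Gaussian, it is itself Gaussian with mean $\bm{v}_{jk}^{\textnormal{T}}\bm{a}_{i}^{\textnormal{U}} + (b_j - b_k)$ and variance $\lambda \bm{v}_{jk}^{\textnormal{T}} \Sigma_{\tilde{y}_i^{\textnormal{U}}} \bm{v}_{jk}$, so the Gaussian moment-generating function (same computation as Eq.~(\ref{Js_4})) evaluates each expectation in closed form. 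This produces a per-$k$ bound
\[
\mathrm{E}_{\tilde{\bm{a}}_{i}^{\textnormal{U}}}\!\Bigl[-\log \tfrac{e^{\bm{w}^{\textnormal{T}}_{k}\tilde{\bm{a}}_{i}^{\textnormal{U}}+ b_{k}}}{\sum_{j}e^{\bm{w}^{\textnormal{T}}_{j}\tilde{\bm{a}}_{i}^{\textnormal{U}} + b_{j}}}\Bigr] \leq -\log \tfrac{e^{\bm{w}_k^{\textnormal{T}}\bm{a}_{i}^{\textnormal{U}} + b_k}}{\sum_{j} e^{\bm{w}_j^{\textnormal{T}}\bm{a}_{i}^{\textnormal{U}} + b_j + \frac{\lambda}{2}\bm{v}_{jk}^{\textnormal{T}}\Sigma_{\tilde{y}_i^{\textnormal{U}}}\bm{v}_{jk}}},
\]
after folding the extra variance term back into the denominator.

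Finally, I would multiply both sides by $p_{ik}^{\textnormal{U}} \geq 0$ and sum over $k$, which preserves the inequality because the weights are nonnegative; then averaging over $i$ recovers precisely the right-hand side $\overline{\mathcal{L}}^{\textnormal{U}}_{\infty}$ of Eq.~(\ref{proposition_1_ul}). The $p_{ik}^{\textnormal{U}}$ are treated as constants throughout (as stipulated in the text preceding the proposition), which is what lets me pull them outside the expectation before applying Jensen.

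Conceptually there is no real obstacle: the only structural difference from Proposition~\ref{proposition} is the outer convex combination over $k$ in place of selecting the single ground-truth class $y_i$. The mild bookkeeping issue to watch is sign preservation—one must apply Jensen inside the $-\log$ term for each $k$ \emph{before} weighting by $p_{ik}^{\textnormal{U}}$, since applying it after would require an unnecessary interchange. Nonnegativity of $p_{ik}^{\textnormal{U}}$ and the fact that $\Sigma_{\tilde{y}_i^{\textnormal{U}}}$ does not depend on $k$ make the remaining steps entirely parallel to the supervised case.
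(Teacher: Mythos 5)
Your proposal is correct and follows essentially the same route as the paper: rewrite $\mathcal{L}^{\textnormal{U}}_{\infty}$ as a weighted double sum over $i$ and $k$, bound the per-$(i,k)$ expectation via Jensen's inequality plus the Gaussian moment-generating function exactly as in Proposition~\ref{proposition}, and then combine using the nonnegative weights $p_{ik}^{\textnormal{U}}$. If anything you are slightly more careful than the paper, which just cites ``the conclusion of Proposition~\ref{proposition}'' even though that statement couples the covariance index to the target class $y_i$, whereas here $\Sigma_{\tilde{y}_i^{\textnormal{U}}}$ is held fixed while $k$ varies; re-running the Jensen/MGF steps as you do, rather than invoking the earlier conclusion verbatim, closes that small gap.
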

\begin{proof}
    According to Eq. (\ref{ul_loss}), we have:
\begin{align}
    \label{ul_1}
    \mathcal{L}^{\textnormal{U}}_{\infty}
    = & 
    \sum_{k=1}^{C} \left\{\frac{1}{N}\!\sum_{i=1}^{N}
    p_{ik}^{\textnormal{U}}
    \mathrm{E}_{\tilde{\bm{a}}_{i}}[
        -\log(
            \frac{e^{\bm{w}^{\textnormal{T}}_{k}\tilde{\bm{a}}_{i}+ b_{k}}}
        {\sum_{j=1}^{C}e^{\bm{w}^{\textnormal{T}}_{j}\tilde{\bm{a}}_{i} + b_{j}}})]\right\}
    \\
    \label{ul_2}
    \begin{split}
    \leq & 
    \!\sum_{k=1}^{C}\! 
    \left[\!\frac{1}{N}\!\! \sum_{i=1}^{N} \!-p_{ik}^{\textnormal{U}} \!\log(\!
              \frac{e^{
             \bm{w}^{\textnormal{T}}_{k}\bm{a}_{i} + b_{k}
         }}{\sum_{j=1}^{C} \! e^{
            \bm{w}^{\textnormal{T}}_{j}\!\bm{a}_{i}\!+\!b_{j}\!+\!\frac{\lambda}{2}\!\bm{v}^{\textnormal{T}}_{jk} \! (\Sigma_{\tilde{y}_i^{\textnormal{U}}}) \bm{v}_{jk}
         }}\!)\!\right]
    \end{split}
    \\
    \label{ul_3}
    = & \overline{\mathcal{L}}^{\textnormal{U}}_{\infty}.
\end{align}
In the above, Inequality (\ref{ul_2}) follows from the conclusion of Proposition \ref{proposition}.
\qedhere
\end{proof} 

In sum, the loss function of our method is given by:
\begin{equation}
    \label{overall}
    \overline{\mathcal{L}}^{\textnormal{L}}_{\infty} + \eta_1 \overline{\mathcal{L}}^{\textnormal{U}}_{\infty} + \eta_2 \mathcal{L}_{\textnormal{regularization}},
\end{equation}
where $\overline{\mathcal{L}}^{\textnormal{L}}_{\infty}$ is the ISDA loss on labeled data. As most deep semi-supervised learning algorithms model unlabeled data in regularization terms, they can be conveniently integrated with ISDA by appending the corresponding regularization term $\mathcal{L}_{\textnormal{regularization}}$ to the loss function. The coefficient $\eta_1$ and $\eta_2$ are pre-defined hyper-parameters to determine the importance of different regularization terms.

\section{Experiments}


In this section, we empirically validate the proposed algorithm on several tasks.
First, we present the experimental results of supervised image classification on widely used benchmarks, i.e., CIFAR \cite{krizhevsky2009learning} and ImageNet \cite{5206848}. Second, we show the performance of several deep semi-supervised learning algorithms with and without ISDA on CIFAR \cite{krizhevsky2009learning} and SVHN \cite{goodfellow2013multi}. Third, we apply ISDA to the semantic segmentation task on the Cityscapes dataset \cite{cordts2016cityscapes}. 

In addition, to demonstrate that ISDA encourages models to learn better representations, we conduct experiments by employing the models trained with ISDA as backbones for the object detection task and the instance segmentation task on the MS COCO dataset \cite{lin2014microsoft}, which are presented in Appendix \ref{COCO_results}.



Furthermore, a series of analytical experiments are conducted to provide additional insights into our algorithm. We provide the visualization results of both the augmented samples and the representations learned by deep networks. We also present the empirical results to check the tightness of the upper bound used by ISDA. The performance of explicit and implicit semantic data augmentation is compared. Finally, ablation studies and sensitivity tests are conducted to show how the components and hyper-parameters affect the performance of ISDA.

\subsection{Experimental Setups for Image Classification}
\label{sec:dataset-baseline}
\textbf{Datasets.} We use three image classification benchmarks in the experiments. (1) The two \emph{CIFAR} datasets consist of 32x32 colored natural images in 10 classes for CIFAR-10 and 100 classes for CIFAR-100, with 50,000 images for training and 10,000 images for testing, respectively. 
(2) \emph{The Street View House Numbers (SVHN)} dataset \cite{goodfellow2013multi} consists of 32x32 colored images of digits. 73,257 images for training, 26,032 images for testing and 531,131 images for additional training are provided. 
(3) \emph{ImageNet} is a 1,000-class dataset from ILSVRC2012\cite{5206848}, providing 1.2 million images for training and 50,000 images for validation. 

\textbf{Validation set and data pre-procession.}
(1) On CIFAR, in our supervised learning experiments, we hold out 5,000 images from the training set as the validation set to search for the hyper-parameter $\lambda_0$. These samples are also used for training after an optimal $\lambda_0$ is selected, and the results on the test set are reported. Images are normalized with channel means and standard deviations for pre-processing. 
We follow the basic data augmentation operations in \cite{He_2016_CVPR, 2016arXiv160806993H, wang2020collaborative, wang2021revisiting}: 4 pixels are padded at each side of the image, followed by a random 32x32 cropping combined with random horizontal flipping. In semi-supervised learning experiments, we hold out $25\%$ of labeled images as the validation set to select $\lambda_0$, $\eta_1$ and $\eta_2$, following \cite{miyato2018virtual}. Similarly, these samples are also used for training with the selected hyper-parameters. Following the common practice of semi-supervised learning \cite{miyato2018virtual, tarvainen2017mean, laine2016temporal, verma2019interpolation}, we apply ZCA whitening for pre-processing and random 2x2 translation followed by random horizontal flip for basic augmentation.
(2) The SVHN dataset is used for semi-supervised learning experiments, where $25\%$ of labeled images are held out as the validation set. The validation set is put back for training after the hyper-parameter searching. Following \cite{luo2018smooth, tarvainen2017mean}, we perform random 2x2 translation to augment the training data.
(3) On ImageNet, we adopt the same augmentation configurations as \cite{krizhevsky2012imagenet,He_2016_CVPR,2016arXiv160806993H, yang2020resolution, wang2020glance}.

\begin{table*}[t]
    \centering

    \caption{Single crop error rates (\%) of different deep networks on the validation set of ImageNet. We report the results of our implementation with and without ISDA. The better results are \textbf{bold-faced}, while the numbers in brackets denote the performance improvements achieved by ISDA. For a fair comparison, we present the baselines reported by other papers \cite{2016arXiv160806993H, yun2019cutmix} as well. We also report the theoretical computational overhead and the additional training time introduced by ISDA in the last two columns, which is obtained with 8 Tesla V100 GPUs.
    }
    \vskip -0.13in
    \label{ImageNet Results}
    \setlength{\tabcolsep}{2mm}{
    \vspace{5pt}
    \renewcommand\arraystretch{1.21}
    \begin{tabular}{c|c||c|c|c|c|c}
    \hline
    \multirow{2}{*}{Networks}  &  \multirow{2}{*}{Params} &  \multicolumn{3}{c|}{Top-1/Top-5 Error Rates (\%)}  &  Additional Cost &   Additional Cost  \\
    \cline{3-5}
     &  &Reported in \cite{2016arXiv160806993H, yun2019cutmix} & Our Implementation & ISDA & (Theoretical) &  (Wall Time) \\
    \hline
    ResNet-50 \cite{He_2016_CVPR} & 25.6M & 23.7 / 7.1 & 23.0 / 6.8  & \textbf{21.9$_{(1.1)}$ / 6.3} & 0.25\% &7.6\%\\
    ResNet-101 \cite{He_2016_CVPR} & 44.6M  & 21.9 / 6.3 & 21.7 / 6.1  & \textbf{20.8$_{(0.9)}$ / 5.7}  & 0.13\% & 7.4\%\\
    ResNet-152 \cite{He_2016_CVPR} & 60.3M & 21.7 / 5.9 & 21.3 / 5.8 &  \textbf{20.3$_{(1.0)}$ / 5.5} & 0.09\% &5.4\%\\
    \hline
    DenseNet-BC-121 \cite{2016arXiv160806993H} & 8.0M & 25.0 / 7.7 & 23.7 / 6.8 & \textbf{23.2$_{(0.5)}$ / 6.6}  & 0.20\% &5.6\%\\
    DenseNet-BC-265 \cite{2016arXiv160806993H} & 33.3M & 22.2 / 6.1 & 21.9 / 6.1  &  \textbf{21.2$_{(0.7)}$ / 6.0} & 0.24\% &5.4\%\\
    \hline
    ResNeXt-50, 32x4d \cite{xie2017aggregated} & 25.0M & \ \ --\ \ / \ \ --\ \  & 22.5 / 6.4  & \textbf{21.3$_{(1.2)}$ / 5.9}   & 0.24\% &6.6\%\\
    ResNeXt-101, 32x8d \cite{xie2017aggregated} & 88.8M & \ \ --\ \  / \ \ --\ \  & 21.1 / 5.9  & \textbf{20.1$_{(1.0)}$ / 5.4} & 0.06\% &7.9\%\\
    \hline
    \end{tabular}}
    \vskip -0.1in
\end{table*}

\begin{table*}[t]
    \centering

    \caption{Evaluation of ISDA on CIFAR with different models. We report mean values and standard deviations in five independent experiments. The better results are \textbf{bold-faced}.}
    \vskip -0.125in
    \label{different_networks}
    \setlength{\tabcolsep}{4mm}{
    \vspace{5pt}
    \renewcommand\arraystretch{1.21}
    \begin{tabular}{c|c||cc|cc}
    \hline
    \multirow{2}{*}{Networks}  &  \multirow{2}{*}{Params}  & \multicolumn{2}{c|}{CIFAR-10}  &   \multicolumn{2}{c}{CIFAR-100}    \\
     &  &  Basic & ISDA  & Basic &  ISDA \\
    \hline
    ResNet-32 \cite{He_2016_CVPR} & 0.5M & 7.39 $\pm$ 0.10\% & \textbf{7.09 $\pm$ 0.12\%}  &  31.20 $\pm$ 0.41\% & \textbf{30.27 $\pm$ 0.34\%}\\
    ResNet-110 \cite{He_2016_CVPR} & 1.7M  & 6.76 $\pm$ 0.34\%& \textbf{6.33 $\pm$ 0.19\%} &  28.67 $\pm$ 0.44\% & \textbf{27.57 $\pm$ 0.46\%}\\
    SE-ResNet-110 \cite{hu2018squeeze} & 1.7M & 6.14 $\pm$ 0.17\%& \textbf{5.96 $\pm$ 0.21\%} &27.30 $\pm$ 0.03\%& \textbf{26.63 $\pm$ 0.21\%}\\
    Wide-ResNet-16-8 \cite{Zagoruyko2016WideRN} & 11.0M & 4.25 $\pm$ 0.18\%&\textbf{4.04 $\pm$ 0.29\%} & 20.24 $\pm$ 0.27\%& \textbf{19.91 $\pm$ 0.21\%}\\
    Wide-ResNet-28-10 \cite{Zagoruyko2016WideRN} & 36.5M & 3.82 $\pm$ 0.15\% &  \textbf{3.58 $\pm$ 0.15\%} &  18.53 $\pm$ 0.07\% & \textbf{17.98 $\pm$ 0.15\%}\\
    ResNeXt-29, 8x64d \cite{xie2017aggregated} & 34.4M & 3.86 $\pm$ 0.14\% &  \textbf{3.67 $\pm$ 0.12\%} &  18.16 $\pm$ 0.13\%&  \textbf{17.43 $\pm$ 0.25\%}\\
    DenseNet-BC-100-12 \cite{2016arXiv160806993H} & 0.8M & 4.90 $\pm$ 0.08\% &  \textbf{4.54 $\pm$ 0.07\%}&  22.61 $\pm$ 0.10\%& \textbf{22.10 $\pm$ 0.34\%}\\
    Shake-Shake (26, 2x32d) \cite{gastaldi2017shake} & 3.0M &  3.45 $\pm$ 0.01\% & \textbf{3.20 $\pm$ 0.03\%}  &  20.12 $\pm$ 0.39\% & \textbf{19.45 $\pm$ 0.16\%} \\
    Shake-Shake (26, 2x112d) \cite{gastaldi2017shake} & 36.4M &  2.92 $\pm$ 0.02\% & \textbf{2.61 $\pm$ 0.09\%}  & 17.42 $\pm$ 0.44\% & \textbf{16.73 $\pm$ 0.18\%}\\
    \hline
    \end{tabular}}
    \vskip -0.1in
\end{table*}

\begin{table*}[t]
    \centering
    \caption{The theoretical computational overhead and the empirical additional time consumption of ISDA on CIFAR. The results are obtained with a single Tesla V100 GPU.}
    \vskip -0.125in
    \label{ComputationalCost}
    \setlength{\tabcolsep}{1.5mm}{
    \renewcommand\arraystretch{1.21}
    \begin{tabular}{c|c|cc|c|cc|c}
    \hline
    \multirow{4}{*}{Networks} &  & \multicolumn{3}{c|}{CIFAR-10} & \multicolumn{3}{c}{CIFAR-100} \\
    \cline{3-8} & Standard & \multicolumn{2}{c|}{Additional Cost} & \multirowcell{3}{Additional Cost\\(Wall Time)} & \multicolumn{2}{c|}{Additional Cost} & \multirowcell{3}{Additional Cost\\(Wall Time)} \\

    & FLOPs & \multicolumn{2}{c|}{(Theoretical)} & & \multicolumn{2}{c|}{(Theoretical)} & \\

    \cline{3-4} \cline{6-7} & & Absolute & Relative & & Absolute & Relative & \\
    \hline 
 
    ResNet-32 \cite{He_2016_CVPR} & 69.43M & 0.05M & 0.07\% & 1.85\% & 0.44M & 0.63\%  & 1.85\%\\
    ResNet-110 \cite{He_2016_CVPR}& 254.20M & 0.05M & 0.02\%  & 4.48\%  & 0.44M & 0.17\% & 2.94\%\\
    SE-ResNet-110 \cite{hu2018squeeze}& 254.73M & 0.05M & 0.02\% & 2.17\% & 0.44M & 0.17\%  & 1.08\%\\
    Wide-ResNet-16-8 \cite{Zagoruyko2016WideRN}& 1.55G & 2.96M & 0.19\% & 3.39\%  & 27.19M & 1.76\%  & 10.77\%\\
    Wide-ResNet-28-10 \cite{Zagoruyko2016WideRN}& 5.25G & 4.61M & 0.09\% & 2.37\%  & 42.46M & 0.81\% & 12.35\%\\
    ResNeXt-29, 8x64d \cite{xie2017aggregated}& 5.39G & 11.80M & 0.22\% & 1.90\% & 109.00M & 2.02\% & 12.32\%\\
    DenseNet-BC-100-12 \cite{2016arXiv160806993H}& 292.38M & 1.35M & 0.46\% & 5.50\%  & 12.43M & 4.25\%  & 12.03\%\\
    Shake-Shake (26, 2x32d) \cite{gastaldi2017shake} & 426.69M & 0.19M & 0.04\% & 5.77\%  & 1.76M & 0.41\%  & 2.21\% \\
    Shake-Shake (26, 2x112d) \cite{gastaldi2017shake} & 5.13G & 2.31M & 0.05\% & 3.85\%  & 21.30M & 0.42\%  & 5.07\% \\
    \hline

    \end{tabular}}
    
    \vskip -0.1in
 \end{table*}

\begin{table*}[t]
    \centering
    \caption{Evaluation of ISDA with state-of-the-art \textit{non-semantic} augmentation techniques. `RA' and `AA' refer to RandAugment \cite{cubuk2020randaugment} and AutoAugment \cite{cubuk2018autoaugment}, respectively. We report mean values and standard deviations in five independent experiments. The better results are \textbf{bold-faced}.}
    \vskip -0.1in
    \label{complementary_result}
    \setlength{\tabcolsep}{1.4mm}{
    \renewcommand\arraystretch{1.21}
    \begin{tabular}{c|c|cc|cc|cc}
    \hline
    Dataset  &  Networks  &  Cutout \cite{devries2017improved}  &  Cutout + ISDA &   RA \cite{cubuk2020randaugment} &   RA + ISDA  &   AA \cite{cubuk2018autoaugment} &   AA + ISDA  \\
    \hline
    \multirow{3}{*}{CIFAR-10} & Wide-ResNet-28-10 \cite{Zagoruyko2016WideRN} & 2.99 $\pm$ 0.06\% & \textbf{2.83 $\pm$ 0.04\%} & 2.78 $\pm$ 0.03\% & \textbf{2.42 $\pm$ 0.13\%} &  2.65 $\pm$ 0.07\%& \textbf{2.56 $\pm$ 0.01\%}\\
    &Shake-Shake (26, 2x32d) \cite{gastaldi2017shake} & 3.16 $\pm$ 0.09\%& \textbf{2.93 $\pm$ 0.03\%}& 3.00 $\pm$ 0.05\% & \textbf{2.74 $\pm$ 0.03\%} & 2.89 $\pm$ 0.09\% & \textbf{2.68 $\pm$ 0.12\%} \\
    &Shake-Shake (26, 2x112d) \cite{gastaldi2017shake} & 2.36\%& \textbf{2.25\%}&  2.10\%& \textbf{1.76\%}  &  2.01\%& \textbf{1.82\%}\\
    \hline

    \multirow{3}{*}{CIFAR-100} & Wide-ResNet-28-10 \cite{Zagoruyko2016WideRN} & 18.05 $\pm$ 0.25\% & \textbf{16.95 $\pm$ 0.11\%} &17.30 $\pm$ 0.08\%& \textbf{15.97 $\pm$ 0.28\%}& 16.60 $\pm$ 0.40\%& \textbf{15.62 $\pm$ 0.32\%}\\
    &Shake-Shake (26, 2x32d) \cite{gastaldi2017shake} & 18.92  $\pm$ 0.21\%& \textbf{18.17 $\pm$ 0.08\%}&18.11 $\pm$ 0.24\% &\textbf{17.84 $\pm$ 0.16\%}&17.50 $\pm$ 0.19\% &\textbf{17.21 $\pm$ 0.33\%} \\
    &Shake-Shake (26, 2x112d) \cite{gastaldi2017shake} & 17.34  $\pm$ 0.28\%& \textbf{16.24 $\pm$ 0.20\%}&15.95 $\pm$ 0.15\% &\textbf{14.24 $\pm$ 0.07\%}&15.21 $\pm$ 0.20\% &\textbf{13.87 $\pm$ 0.26\%} \\
    \hline

    \end{tabular}}
    \vskip -0.1in
\end{table*}

\begin{table*}[t]
    \centering
    \caption{Comparisons with the state-of-the-art methods. We report mean values and standard deviations of the test errors in five independent experiments. The best results are \textbf{bold-faced}.}
    \vskip -0.1in
    \label{Tab02}
    \setlength{\tabcolsep}{6mm}{
    \renewcommand\arraystretch{1.21} 
    \begin{tabular}{l|cc|cc}
        \hline
    \multirow{2}{*}{Method} & \multicolumn{2}{c|}{ResNet-110} & \multicolumn{2}{c}{
        Wide-ResNet-28-10}\\
    &  CIFAR-10  &  CIFAR-100  &  CIFAR-10  &  CIFAR-100\\
    \hline
    Large Margin \cite{liu2016large} & 6.46 $\pm$ 0.20\% & 28.00 $\pm$ 0.09\% & 3.69 $\pm$ 0.10\% & 18.48 $\pm$ 0.05\%\\
    Disturb Label \cite{Xie2016DisturbLabelRC} & 6.61 $\pm$ 0.04\% & 28.46 $\pm$ 0.32\% & 3.91 $\pm$ 0.10\%& 18.56 $\pm$ 0.22\%\\
    Focal Loss \cite{Lin2017FocalLF} & 6.68 $\pm$ 0.22\% & 28.28 $\pm$ 0.32\% & 3.62 $\pm$ 0.07\% & 18.22 $\pm$ 0.08\%\\
    Center Loss \cite{wen2016discriminative} & 6.38 $\pm$ 0.20\% & 27.85 $\pm$ 0.10\% & 3.76 $\pm$ 0.05\% & {18.50 $\pm$ 0.25\%}\\
    L$_q$ Loss \cite{Zhang2018GeneralizedCE} & 6.69 $\pm$ 0.07\% & 28.78 $\pm$ 0.35\% & 3.78 $\pm$ 0.08\% & 18.43 $\pm$ 0.37\%\\
    Label Smoothing \cite{muller2019does} & 6.58 $\pm$ 0.42\% & 27.89 $\pm$ 0.20\% & 3.79 $\pm$ 0.16\% & 18.48 $\pm$ 0.24\%  \\
    DistributionNet \cite{yu2019robust}  & 6.35 $\pm$ 0.11\% & 28.18 $\pm$ 0.44\% & 3.68 $\pm$ 0.06\% & 18.34 $\pm$ 0.16\%\\
    \hline
    WGAN \cite{arjovsky2017wasserstein} & 6.63 $\pm$ 0.23\% & - & 3.81 $\pm$ 0.08\% & -\\
    CGAN \cite{mirza2014conditional} & 6.56 $\pm$ 0.14\% & 28.25 $\pm$ 0.36\% & 3.84 $\pm$ 0.07\% & 18.79 $\pm$ 0.08\%\\
    ACGAN \cite{odena2017conditional} & 6.32 $\pm$ 0.12\% & 28.48 $\pm$ 0.44\% & 3.81 $\pm$ 0.11\% & 18.54 $\pm$ 0.05\%\\
    infoGAN \cite{chen2016infogan} & 6.59 $\pm$ 0.12\% & 27.64 $\pm$ 0.14\% & 3.81 $\pm$ 0.05\% & 18.44 $\pm$ 0.10\%\\
    \hline
    Basic & 6.76 $\pm$ 0.34\% & 28.67 $\pm$ 0.44\% & - & -\\
    Basic + Dropout & {6.23 $\pm$ 0.11\%} & {27.11 $\pm$ 0.06\%} & 3.82 $\pm$ 0.15\% & 18.53 $\pm$ 0.07\%\\
    ISDA & 6.33 $\pm$ 0.19\% & 27.57 $\pm$ 0.46\% & - & -\\
    ISDA + Dropout & \textbf{5.98 $\pm$ 0.20\%} & \textbf{26.35 $\pm$ 0.30\%} & \textbf{3.58 $\pm$ 0.15\%} & \textbf{17.98 $\pm$ 0.15\%}\\
    \hline
    \end{tabular}}
    \vskip -0.1in
\end{table*}

\textbf{Non-semantic augmentation techniques. }
To study the complementary effects of ISDA to traditional data augmentation methods, three state-of-the-art non-semantic augmentation techniques are applied with and without ISDA. 
(1) \textit{Cutout} \cite{devries2017improved} randomly masks out square regions of inputs during training. 
(2) \textit{AutoAugment} \cite{cubuk2018autoaugment}  automatically searches for the best augmentation policy using reinforcement learning. 
(3) \textit{RandAugment} \cite{cubuk2020randaugment} searches for augmentation policies using grid search in a reduced searching space.

\textbf{Baselines for supervised learning.} 
Our method is compared with several baselines including state-of-the-art robust loss functions and generator-based semantic data augmentation methods. 
(1) \textit{Dropout} \cite{Srivastava2014DropoutAS} is a widely used regularization approach that randomly mutes some neurons during training.
(2) \textit{Large-margin softmax loss} \cite{liu2016large} introduces a large decision margin, measured by a cosine distance, to the standard CE loss.
(3) \textit{Disturb label} \cite{Xie2016DisturbLabelRC} is a regularization mechanism that randomly replaces a fraction of labels with incorrect ones in each iteration.
(4) \textit{Focal loss} \cite{Lin2017FocalLF} focuses on a sparse set of hard examples to prevent easy samples from dominating the training procedure.
(5) \textit{Center loss} \cite{wen2016discriminative} simultaneously learns a center of features for each class and minimizes the distances between the deep features and their corresponding class centers. 
(6) \textit{$L_q$ loss} \cite{Zhang2018GeneralizedCE} is a noise-robust loss function, using the negative Box-Cox transformation. 
(7) \textit{Label Smoothing} \cite{muller2019does} smooth the one-hot label to a soft one with equal values for other classes.
(8) \textit{DistributionNet} \cite{yu2019robust} models the deep features of training samples as Gaussian distributions, and learns the covariance automatically.
(9) For generator-based semantic augmentation methods, we train several state-of-the-art GANs \cite{arjovsky2017wasserstein, mirza2014conditional, odena2017conditional, chen2016infogan}, which are then used to generate extra training samples for data augmentation.

\textbf{Baselines for semi-supervised learning.} 
In semi-supervised learning experiments, the performance of ISDA is tested on the basis of several modern deep semi-supervised learning approaches.
(1) \textit{$\Pi$-model} \cite{laine2016temporal} enforces the model to have the same prediction on a sample with different augmentation and dropout modes.
(2) \textit{Temp-ensemble} \cite{laine2016temporal} attaches a soft pseudo label to each unlabeled sample by performing a moving average on the predictions of networks.
(3) \textit{Mean teacher} \cite{tarvainen2017mean} establishes a teacher network by performing an exponential moving average on the parameters of the model, and leverages the teacher network to produce supervision for unlabeled data.
(4) \textit{Virtual Adversarial Training (VAT)} \cite{miyato2018virtual} adds adversarial perturbation to each sample and enforces the model to have the same prediction on the perturbed samples and the original samples.

For a fair comparison, all methods are implemented with the same training configurations. Details for hyper-parameter settings are presented in Appendix \ref{hyper-para-baseline}.

\textbf{Implementation details.}
For supervised learning, we implement the ResNet, SE-ResNet, Wide-ResNet, ResNeXt, DenseNet and Shake-shake net on the two CIFAR datasets, and implement ResNet, DenseNet and ResNeXt on ImageNet. 
For semi-supervised learning, we implement the widely used CNN-13 network \cite{luo2018smooth, xie2019unsupervised, verma2019interpolation, laine2016temporal, tarvainen2017mean, miyato2018virtual, wang2020meta}. Details for implementing these models are given in Appendix \ref{Training_Details_sup} and Appendix \ref{Training_Details_semi_sup} for supervised learning and semi-supervised learning, respectively. 
The hyper-parameter $\lambda_0$ for ISDA is selected from the set $\{0.1, 0.25, 0.5, 0.75, 1\}$ according to the performance on the validation set. On ImageNet, due to GPU memory limitation, we approximate the covariance matrices by their diagonals, i.e., the variance of each dimension of the features. The best hyper-parameter $\lambda_0$ is selected from $\{1, 2.5, 5, 7.5, 10\}$. For semi-supervised learning tasks, the hyper-parameter $\eta_1$ is selected from $\{0.5, 1, 2\}$. In all experiments, the average test error of the last 10 epochs is calculated as the result to be reported.

\subsection{Supervised Image Classification}
\subsubsection{Main Results}
\textbf{Results on ImageNet. } 
Table \ref{ImageNet Results} presents the performance of ISDA on the large scale ImageNet dataset with state-of-the-art deep networks. It can be observed that ISDA significantly improves the generalization performance of these models. For example, the Top-1 error rate of ResNet-50 is reduced by $1.1\%$ via being trained with ISDA, approaching the performance of ResNet-101 ($21.9\%$ v.s. $21.7\%$) with $43\%$ fewer parameters. Similarly, the performance of ResNet-101+ISDA surpasses that of ResNet-152 with $26\%$ less parameters. Compared to ResNets, DenseNets generally suffer less from overfitting due to their architecture design, and thus appear to benefit less from our algorithm.


\textbf{Results on CIFAR. } 
We report the error rates of several modern deep networks with and without ISDA on CIFAR-10/100 in Table \ref{different_networks}. Similar observations to ImageNet can be obtained. On CIFAR-100, for relatively small models like ResNet-32 and ResNet-110, ISDA reduces test errors by about $1\%$, while for larger models like Wide-ResNet-28-10 and ResNeXt-29, 8x64d, our method outperforms the competitive baselines by nearly $0.7\%$.

\textbf{Complementing explicit augmentation techniques. } 
Table \ref{complementary_result} shows the experimental results with recently proposed traditional image augmentation methods (i.e., Cutout \cite{devries2017improved}, RandAugment \cite{cubuk2020randaugment} and AutoAugment \cite{cubuk2018autoaugment}). Interestingly, ISDA seems to be even more effective when these techniques exist. For example, when applying AutoAugment, ISDA achieves performance gains of $1.34\%$ and $0.98\%$ on CIFAR-100 with the Shake-Shake (26, 2x112d) and the Wide-ResNet-28-10, respectively. Note that these improvements are more significant than the standard situations. A plausible explanation for this phenomenon is that non-semantic augmentation methods help to learn a better feature representation, which makes semantic transformations in the deep feature space more reliable. The curves of test errors during training on CIFAR-100 with Wide-ResNet-28-10 are presented in Figure \ref{bound_fig}. It is clear that ISDA achieves a significant improvement after the third learning rate drop, and shows even better performance after the fourth drop.

\subsubsection{Comparisons with Other Approaches}
We compare ISDA with a number of competitive baselines described in Section \ref{sec:dataset-baseline}, ranging from robust loss functions to semantic data augmentation algorithms based on generative models.
The results are summarized in Table \ref{Tab02}.
One can observe that ISDA compares favorably with all these baseline algorithms.
On CIFAR-100, the best test errors of other robust loss functions are 27.85\% and 18.22\% with ResNet-110 and Wide-ResNet-28-10, respectively, while ISDA achieves 27.57\% and 17.98\%, respectively. Note that all the results with Wide-ResNet-28-10 use the dropout technique.


Among all GAN-based semantic augmentation methods, ACGAN gives the best performance, especially on CIFAR-10. However, these models generally suffer a performance reduction on CIFAR-100, which does not contain enough samples to learn a valid generator for each class. In contrast, ISDA shows consistent improvements on both the two datasets. In addition, GAN-based methods require additional computation to train the generators, and introduce significant overhead to the training process. In comparison, ISDA not only leads to lower generalization error, but is simpler and more efficient.

\begin{figure}[t]
    \centering
    \includegraphics[width=\columnwidth]{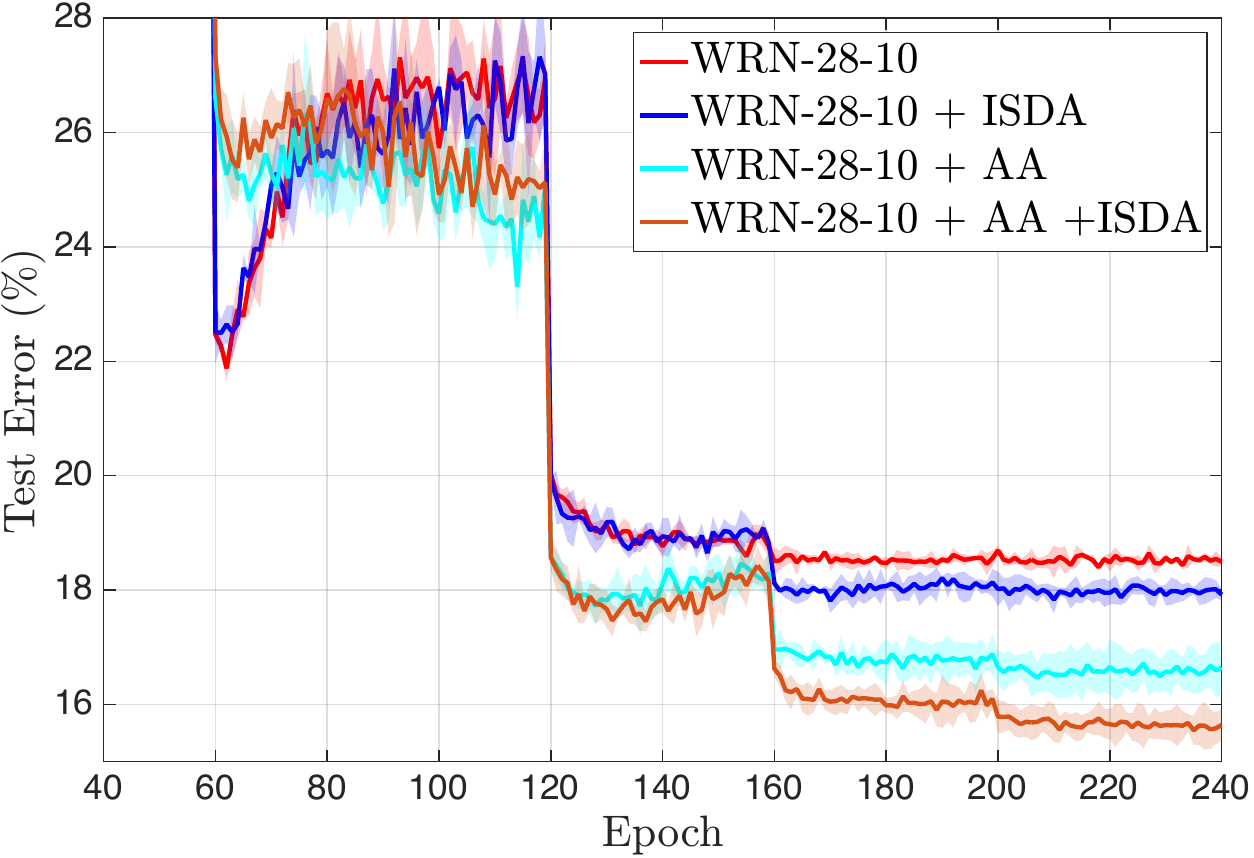}	
    \vskip -0.149in
    \caption{Curves of test errors on CIFAR-100 with Wide-ResNet (WRN). `AA' refers to  AutoAugment \cite{cubuk2018autoaugment}. \label{bound_fig}}  
    \vskip -0.1in
\end{figure}

\begin{table*}[t]
    \centering
    \caption{Performance of state-of-the-art semi-supervised learning algorithms with and without ISDA. We conduct experiments with different numbers of labeled samples. Mean results and standard deviations of five independent experiments are reported. The better results are \textbf{bold-faced}. }
    \vskip -0.109in
    \label{semi-supervised}
    \setlength{\tabcolsep}{1.5mm}{
    \renewcommand\arraystretch{1.21}
    \begin{tabular}{c|ccc|cc|c}
    \hline
    Dataset & \multicolumn{3}{c|}{CIFAR-10}  &  \multicolumn{2}{c|}{CIFAR-100}    &   SVHN \\
    \hline
    Labeled Samples & 1,000&2,000&4,000  &  10,000&20,000   &   500 \\
    \hline
    $\Pi$-model \cite{laine2016temporal} & 28.74  $\pm$ 0.48\% & 17.57  $\pm$ 0.44\%& 12.36  $\pm$ 0.17\% & 38.06  $\pm$ 0.37\%  & 30.80  $\pm$ 0.18\%   &  -  \\
    $\Pi$-model + ISDA  & \textbf{23.99 $\pm$ 1.30\%} & \textbf{14.90 $\pm$ 0.10\%}&  \textbf{11.35 $\pm$ 0.19\%} & \textbf{36.93 $\pm$ 0.28\%} & \textbf{ 30.03 $\pm$ 0.41\% }  &  -  \\
    \hline
    Temp-ensemble \cite{laine2016temporal}  & 25.15  $\pm$ 1.46\% &15.78  $\pm$ 0.44\% & 11.90  $\pm$ 0.25\%  & 41.56  $\pm$ 0.42\% & 35.35  $\pm$ 0.40\%  &  -  \\
    Temp-ensemble + ISDA & \textbf{22.77  $\pm$ 0.63\%} & \textbf{14.98  $\pm$ 0.73\%}&  \textbf{11.25  $\pm$ 0.31\%} & \textbf{40.47  $\pm$ 0.24\%} &  \textbf{34.58  $\pm$ 0.27\%} &  -  \\
    \hline
    Mean Teacher \cite{tarvainen2017mean} & 18.27  $\pm$ 0.53\%&13.45  $\pm$ 0.30\%&10.73  $\pm$ 0.14\%  & 36.03  $\pm$ 0.37\% &  30.00  $\pm$ 0.59\%&  4.18  $\pm$ 0.27\%  \\
    Mean Teacher + ISDA &\textbf{17.11 $\pm$ 1.03\%} & \textbf{12.35 $\pm$ 0.14\%}&\textbf{9.96 $\pm$ 0.33\%} & \textbf{34.60  $\pm$ 0.41\%} & \textbf{29.37  $\pm$ 0.30\%} &  \textbf{4.06 $\pm$ 0.11\%} \\
    \hline
    VAT \cite{miyato2018virtual} & 18.12   $\pm$ 0.82\%&13.93 $\pm$ 0.33\%&11.10   $\pm$ 0.24\%  &40.12  $\pm$ 0.12\%  &34.19 $\pm$ 0.69\%   &  5.10   $\pm$ 0.08\%  \\
    VAT + ISDA &\textbf{14.38 $\pm$ 0.18\%} & \textbf{11.52 $\pm$ 0.05\%}&\textbf{9.72 $\pm$ 0.14\%} &\textbf{36.04 $\pm$ 0.47\%}  &\textbf{30.97 $\pm$ 0.42\%}   &   \textbf{4.86 $\pm$ 0.18\%} \\

    \hline
    \end{tabular}}
    \vskip -0.1in
\end{table*}

\begin{table*}[t]
    \centering
    \caption{Performance of state-of-the-art semantic segmentation algorithms on Cityscapes with and without ISDA. `Multi-scale' and `Flip' denote employing the averaged prediction of multi-scale (\{0.75, 1, 1.25. 1.5\}) and left-right flipped inputs during inference. We present the results reported in the original papers in the `original' row. The numbers in brackets denote the performance improvements achieved by ISDA. The better results are \textbf{bold-faced}.
    }
    \vskip -0.12in
    \label{Segmentation}
    \setlength{\tabcolsep}{4mm}{
    \vspace{5pt}
    \renewcommand\arraystretch{1.21}
    \begin{tabular}{c|c|c|c|c}
    \hline
    \multirow{2}{*}{Method}  &  \multirow{2}{*}{Backbone} &  \multicolumn{3}{c}{mIoU (\%)} \\
    \cline{3-5}
    && Single Scale & Multi-scale & Multi-scale + Flip \\
    \hline
    PSPNet \cite{zhao2017pyramid} &  ResNet-101 &  77.46  & 78.10  & 78.41  \\
    PSPNet + ISDA &  ResNet-101 &  \textbf{78.72 ${(\uparrow1.26)}$} & \textbf{79.64 ${(\uparrow1.54)}$} & \textbf{79.44 ${(\uparrow1.03)}$}    \\
    \hline
    DeepLab-v3 (Original) \cite{chen2017rethinking}  &  ResNet-101 &  77.82 & 79.06 & 79.30    \\
    DeepLab-v3  &  ResNet-101 &  78.38  & 79.20  & 79.47  \\
    DeepLab-v3 + ISDA &  ResNet-101 &  \textbf{79.41 ${(\uparrow1.03)}$} & \textbf{80.30 ${(\uparrow1.10)}$} & \textbf{80.36 ${(\uparrow0.89)}$}    \\
    \hline
    \end{tabular}}
\end{table*}

\begin{figure*}[t]
    \begin{center}
    \vskip -0.1in
    \includegraphics[width=1.6\columnwidth]{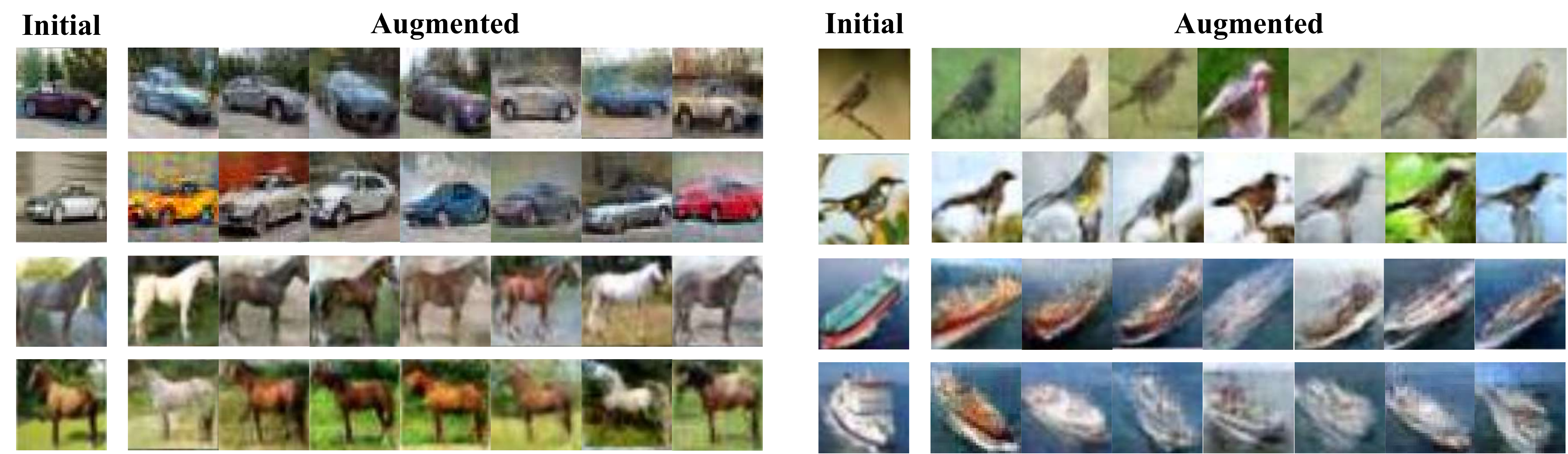}
    \vskip -0.1in
    \caption{Visualization of the semantically augmented images on CIFAR.}
    \vskip -0.1in
    \label{Visual}
    \end{center}
    \vskip -0.1in
  \end{figure*}

\subsubsection{Efficiency of ISDA}
\label{Computational_Cost_ISDA}
We report the theoretical computational overhead (measured by FLOPs) and the practical additional time consumption of ISDA in Table \ref{ImageNet Results} and Table \ref{ComputationalCost}. The results are obtained with 8 and 1 Tesla V100 GPUs on ImageNet and CIFAR using the public code we provide (\color{blue}{\textit{https://github.com/blackfeather-wang/ISDA-for-Deep-Networks}}\color{black}). One can see that ISDA increases the computational cost by no more than $1\%$ for most of the widely used deep networks on both ImageNet and CIFAR. The observation is consistent with our analysis in Section \ref{Complexity}. Notably, we find that our method involves a bit more computational consumption with ResNeXt and DenseNet on CIFAR-100. That is because the dimensions of the deep features learned by the two networks are relatively large. As a consequence, the size of the estimated covariance matrices is large, and thus more computation is required to update the covariance. Whereas, the additional computational cost is up to $4.25\%$, which will not significantly affect the training efficiency. Empirically, due to implementation issues, we observe $5\%$ to $7\%$ and $1\%$ to $12\%$ increase in training time on ImageNet and CIFAR, respectively.

\subsection{Semi-supervised Image Classification}
\label{semi_ISDA_result}
To test the performance of ISDA on semi-supervised learning tasks, we divide the training set into two parts, a labeled set and an unlabeled set, by randomly removing parts of the labels, and implement the proposed semi-supervised ISDA algorithm on the basis of several state-of-the-art semi-supervised learning algorithms. Results with different numbers of labeled samples are presented in Table \ref{semi-supervised}. It can be observed that ISDA complements these methods and further improves the generalization performance significantly. On CIFAR-10 with 4,000 labeled samples, adopting ISDA reduces the test error by $1.38\%$ with the VAT algorithm. In addition, ISDA performs even more effectively with fewer labeled samples and more classes. For example, VAT + ISDA outperforms the baseline by $3.74\%$ and $4.08\%$ on CIFAR-10 with 1,000 labeled samples and CIFAR-100 with 10,000 labeled samples, respectively.

\subsection{Semantic Segmentation on Cityscapes}
\label{Semantic_segmentation}
As ISDA augments training samples in the deep feature space, it can also be adopted for other classification based vision tasks, as long as the softmax cross-entropy loss is used. To demonstrate that, we apply the proposed algorithm to the semantic segmentation task on the Cityscapes dataset \cite{cordts2016cityscapes}, which contains 5,000 1024x2048 pixel-level finely annotated images and 20,000 coarsely annotated images from 50 different cities. Each pixel of the image is categorized among 19 classes. Following \cite{huang2019ccnet, liu2020structured}, we conduct our experiments on the finely annotated dataset and split it by 2,975/500/1,525 for training, validation and testing.

We first reproduce two modern semantic segmentation algorithms, PSPNet \cite{zhao2017pyramid} and Deeplab-v3 \cite{chen2017rethinking}, using the standard hyper-parameters\footnote{\color{blue}https://github.com/speedinghzl/pytorch-segmentation-toolbox\color{black}}. Then we fix the training setups and utilize ISDA to augment each pixel during training. Similar to ImageNet, we approximate the covariance matrices by their diagonals to save GPU memory. The results on the validation set are shown in Table \ref{Segmentation}. It can be observed that ISDA improves the performance of both the two baselines by nearly $1\%$ in terms of mIoU. In our experiments, we witness about $6\%$ increase in training time with ISDA.


  \begin{figure*}
    \begin{center}
    \includegraphics[width=1.8\columnwidth]{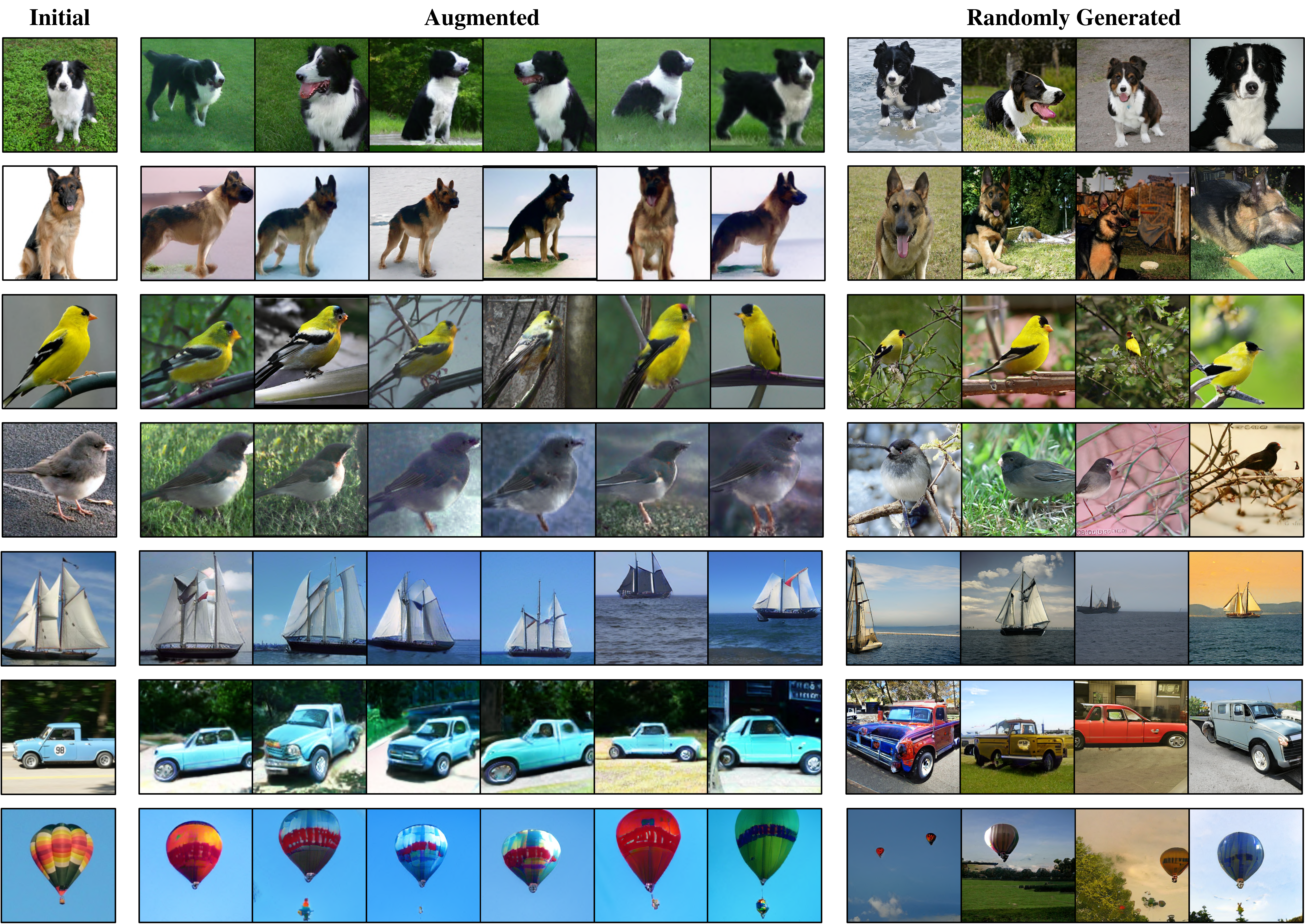}
    \vskip -0.1in
    \caption{Visualization of the semantically augmented images on ImageNet. ISDA is able to alter the semantics of images that are unrelated to the class identity, like backgrounds, actions of animals, visual angles, etc. We also present the randomly generated images of the same class.}
    \vskip -0.1in
    \label{fig:Visual_imagenet}
    \end{center}
    \vskip -0.1in
  \end{figure*}

\begin{figure*}
    \begin{center}
    \includegraphics[width=1.9\columnwidth]{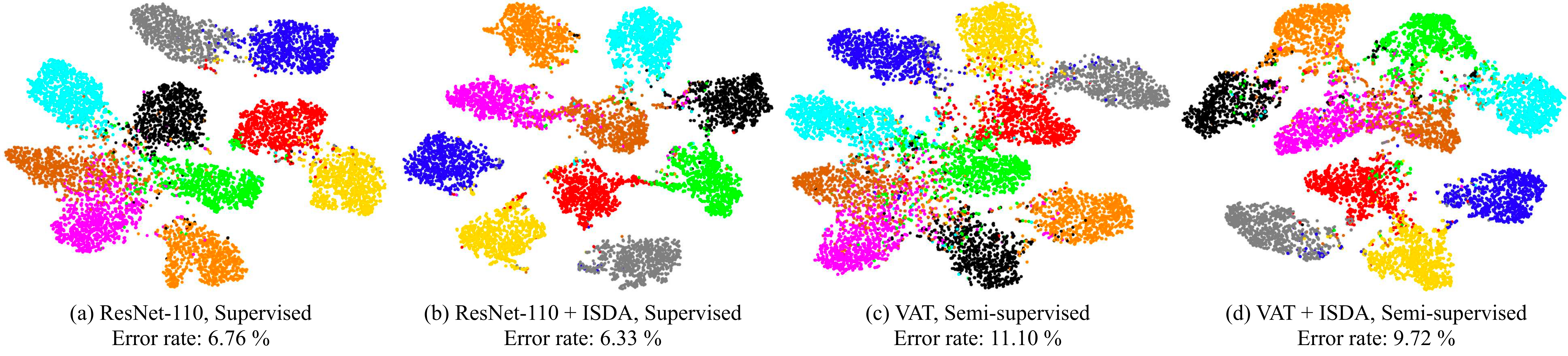}
    \vskip -0.1in
    \caption{Visualization of deep features on CIFAR-10 using the t-SNE algorithm \cite{maaten2008visualizing}. Each color denotes a class. (a), (b) present the results of supervised learning with ResNet-110, while (c), (d) present the results of semi-supervised learning with the VAT algorithm and 4000 labeled samples. The standard non-semantic data augmentation techniques are implemented.}
    \vskip -0.1in
    \label{tsne}
    \end{center}
    \vskip -0.1in
\end{figure*}



\subsection{Analytical Results}
\label{Analytical_results}
\subsubsection{Visualization}

\textbf{Visualization of augmented images. }
To demonstrate that our method is able to generate meaningful semantically augmented samples, we introduce an approach to map the augmented features back to the pixel space to explicitly show semantic changes of the images. Due to space limitations, we defer the detailed introduction of the mapping algorithm and present it in Appendix \ref{reverse_alg}. Figure \ref{Visual} and Figure \ref{fig:Visual_imagenet} show the visualization results. The first column represents the original images. The 'Augmented' columns present the images augmented by the proposed ISDA. It can be observed that ISDA is able to alter the semantics of images, e.g., backgrounds, visual angles, actions of dogs and color of skins, which is not possible for traditional data augmentation techniques. For a clear comparison, we also present the randomly generated images of the same class.


\textbf{Visualization of Deep Features.}
We visualize the learned deep features on CIFAR-10 with and without ISDA using the t-SNE algorithm \cite{maaten2008visualizing}. The results of both supervised learning with ResNet-110 and semi-supervised learning with VAT are presented. It can be observed that, with ISDA, the deep features of different classes form more tight and concentrated clusters. Intuitively, they are potentially more separable from each other. In contrast, features learned without ISDA are distributed in clusters that have many overlapped parts. 

\subsubsection{Tightness of the Upper Bound $\overline{\mathcal{L}}_{\infty}$}
As aforementioned, the proposed ISDA algorithm uses the upper bound of the expected loss as the the surrogate loss. Therefore, the upper bound $\overline{\mathcal{L}}_{\infty}$ is required to be tight enough to ensure that the expected loss is minimized. To check the tightness of $\overline{\mathcal{L}}_{\infty}$ in practice, we empirically compute ${\mathcal{L}}_{\infty}$ and $\overline{\mathcal{L}}_{\infty}$ over the training iterations of ResNet-110, shown in Figure \ref{loss_curve}. We can observe that $\overline{\mathcal{L}}_{\infty}$ gives a very tight upper bound on both CIFAR-10 and CIFAR-100.

\begin{figure}[t]
    \begin{center}
    \subfigure[CIFAR-10]{
        \label{fig:loss_curve_1}
    \includegraphics[width=0.49\columnwidth]{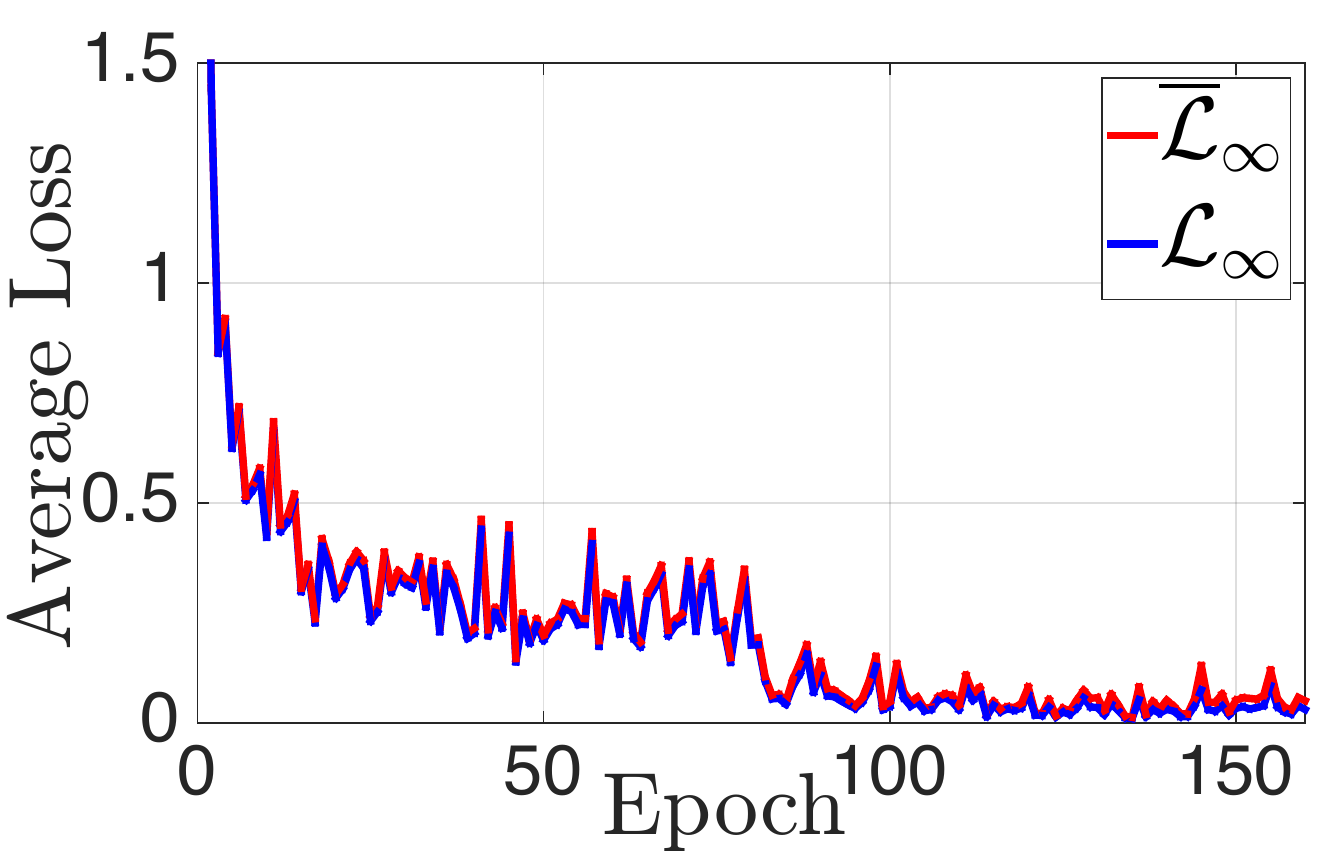}
    }
    \hspace{-0.15in}
    \subfigure[CIFAR-100]{
        \label{fig:loss_curve_2}
    \includegraphics[width=0.49\columnwidth]{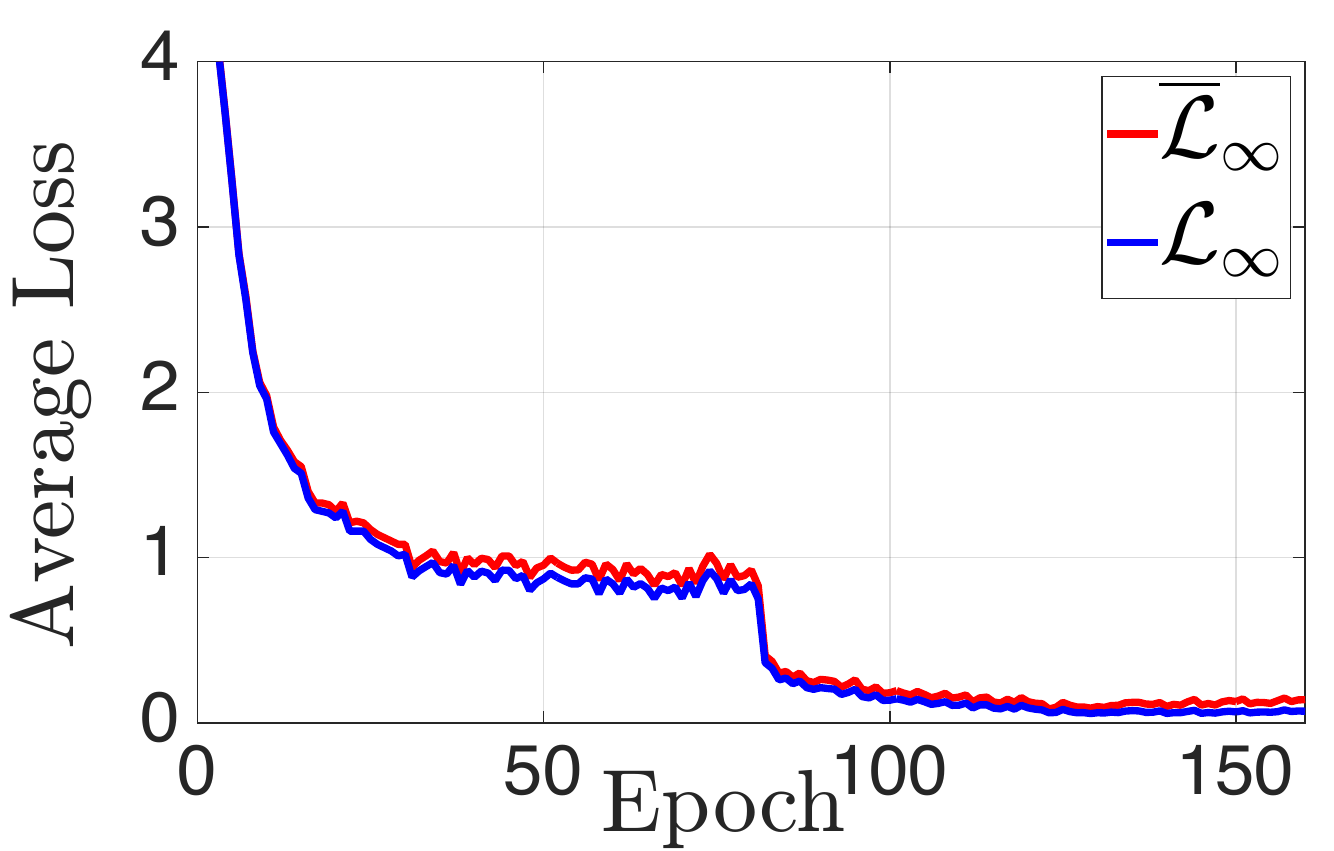}}
    \vskip -0.1in
    \caption{Values of $\mathcal{L}_{\infty}$ and $\overline{\mathcal{L}}_{\infty}$ over the training process. The value of ${\mathcal{L}}_{\infty}$ is estimated using Monte-Carlo sampling with a sample size of 1,000.}
    \label{loss_curve}
    \end{center}
    \vskip -0.2in
\end{figure}

\begin{figure}[t]
    \hspace{-0.4in}
    \begin{center}
    \subfigure[w/ Cutout]{
        \label{fig:samlp_M_cutout}
    \includegraphics[width=0.5\columnwidth]{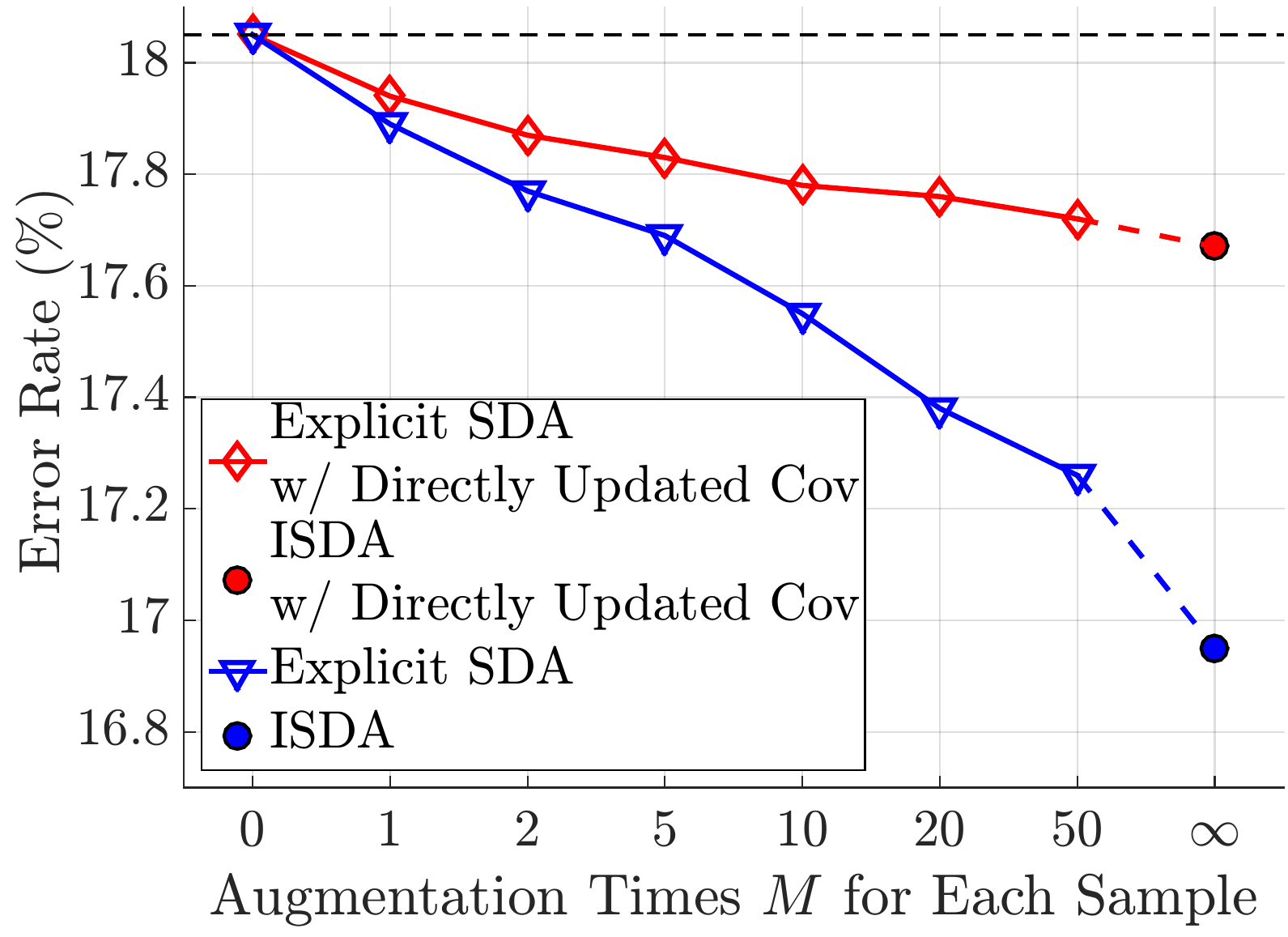}
    }
    \hspace{-0.2in}
    \subfigure[w/ AutoAugment]{
        \label{fig:samlp_M_AA}
    \includegraphics[width=0.5\columnwidth]{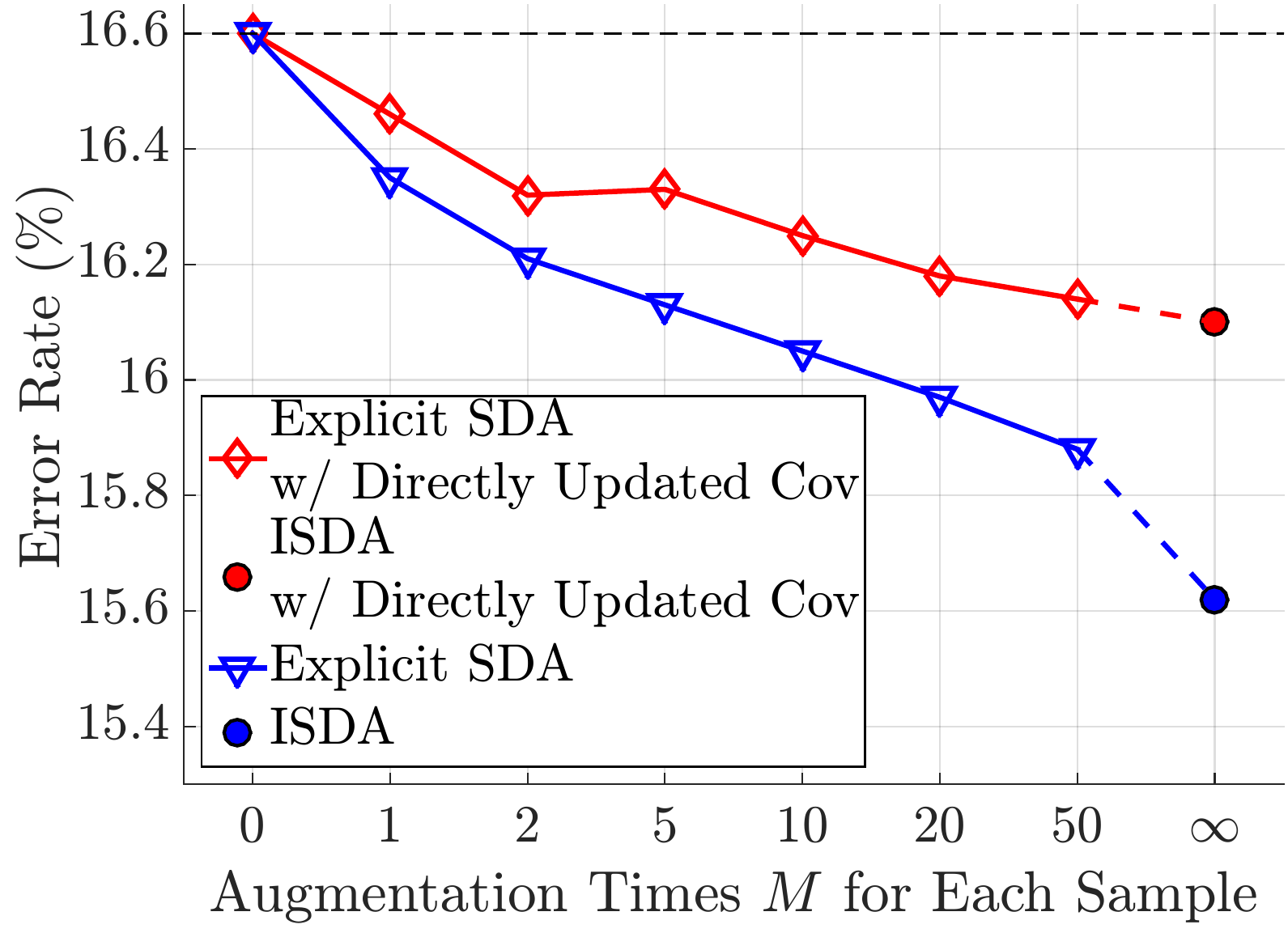}}
    \vskip -0.1in
    \caption{Comparisons of explicit semantic data augmentation (explicit SDA) and ISDA. For the former, we vary the value of the sample times $M$, and train the networks by minimizing Eq. (\ref{eq2}). As a baseline, we also consider directly updating the covariance matrices (Cov) $\Sigma_{1}$, $\Sigma_{2}$, $...$, $\Sigma_{C}$ with gradient decent. The results are presents in red lines. We report the test errors of Wide-ResNet-28-10 on CIFAR-100 with the Cutout and AutoAugment augmentation. $M=0$ refers to the baseline results, while $M=\infty$ refers to ISDA.}
    \label{fig:samlp_M}
    \end{center}
    \vskip -0.2in
\end{figure}

\subsubsection{Comparisons of Explicit and Implicit Semantic Data Augmentation}
To study whether the proposed upper bound $\overline{\mathcal{L}}_{\infty}$ leads to better performance than the sample-based explicit semantic data augmentation (i.e., explicit SDA, minimizing Eq. (\ref{eq2}) with certain sample times $M$), we compare these two approaches in Figure \ref{fig:samlp_M}. We also consider another baseline, namely learning the covariance matrices $\Sigma_{1}$, $\Sigma_{2}$, $...$, $\Sigma_{C}$ directly by gradient decent. For explicit SDA, this is achieved by the re-parameterization trick \cite{Kingma2014AutoEncodingVB, yu2019robust}. To ensure learning symmetrical positive semi-definite covariance matrices, we let $\Sigma_{i} = \bm{D}_{i}\bm{D}_{i}^{\textnormal{T}}$, and update $\bm{D}_{i}$ instead, which has the same size as $\Sigma_{i}$. In addition, to avoid trivially obtaining all-zero covariance matrices, we add the feature uncertainty loss in \cite{yu2019robust} to the loss function for encouraging the augmentation distributions with large entropy. Its coefficient is tuned on the validation set.

From the results one can observe that explicit SDA with small $M$ manages to reduce test errors, but the effects are less significant. This might be attributed to the high dimensions of feature space. For example, given that Wide-ResNet-28-10 produces 640-dimensional features, small sample numbers (e.g., $M=1,2,5$) may result in poor estimates of the expected loss. Accordingly, when $M$ grows larger, the performance of explicit SDA approaches ISDA, indicating that ISDA models the case of $M \to \infty$. On the other hand, we note that the dynamically estimated intra-class covariance matrices outperform the directly learned ones consistently for both explicit and implicit augmentation. We tentatively attribute this to the rich class-conditional semantic information captured by the former.

\subsubsection{Sensitivity Test}
To study how the hyper-parameter $\lambda_0$ affects the performance of our method, sensitivity tests are conducted for both supervised learning and semi-supervised learning. The results are shown in Figure \ref{sensitivity}.
It can be observed that ISDA achieves superior performance robustly with $ 0.25\!\leq\!\lambda_0\!\leq\!1$, and the error rates start to increase with $\lambda_0\!>\!1$. However, ISDA is still effective even when $\lambda_0$ grows to $5$, while it performs slightly worse than baselines when $\lambda_0$ reaches $10$.
Empirically, we recommend $\lambda_0\!=\!0.5$ for naive implementation or a start point of hyper-parameter searching.

\begin{figure}[t]
    \begin{center}
    \subfigure[Supervised learning]{
        \label{fig:sense_1}
    \includegraphics[width=0.49\columnwidth]{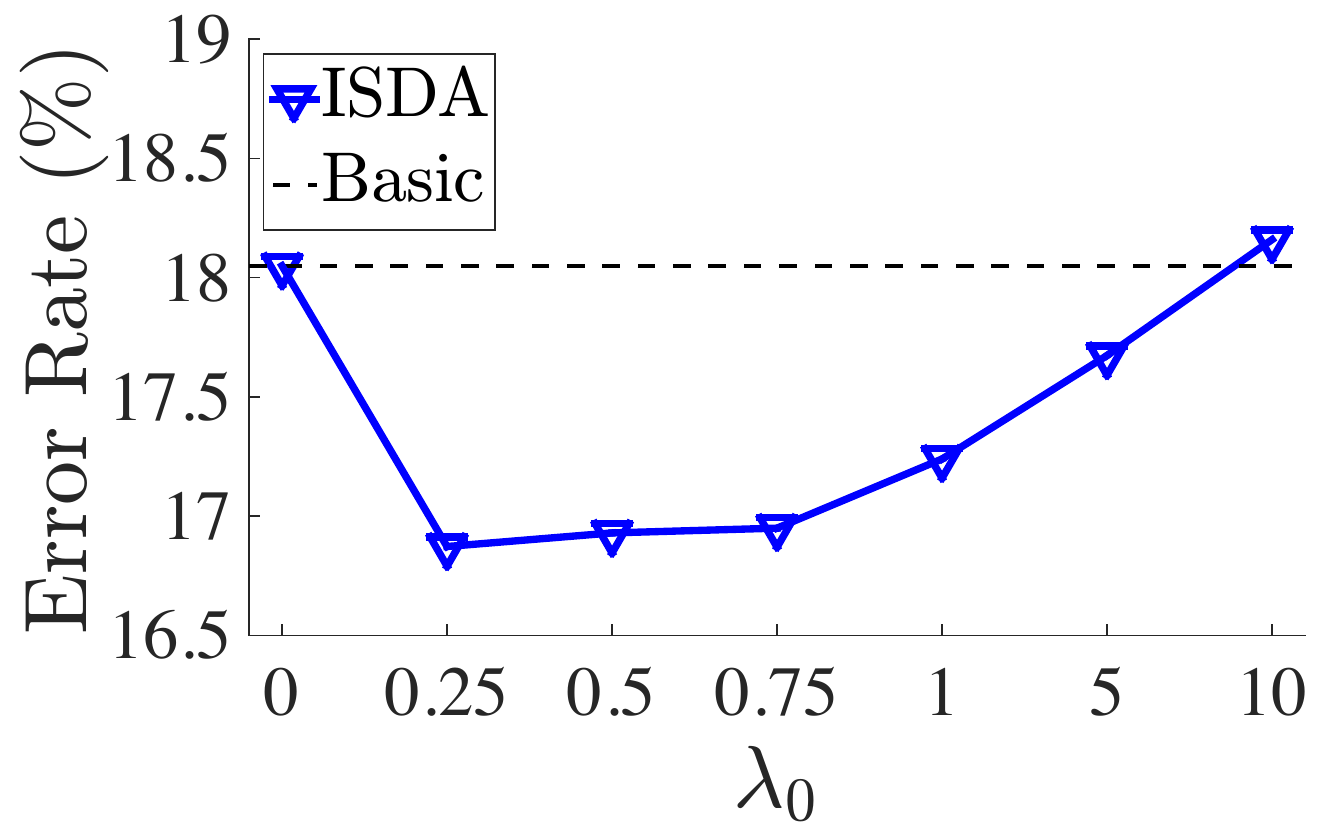}
    }
    \hspace{-0.15in}
    \subfigure[Semi-supervised learning]{
        \label{fig:sense_2}
    \includegraphics[width=0.49\columnwidth]{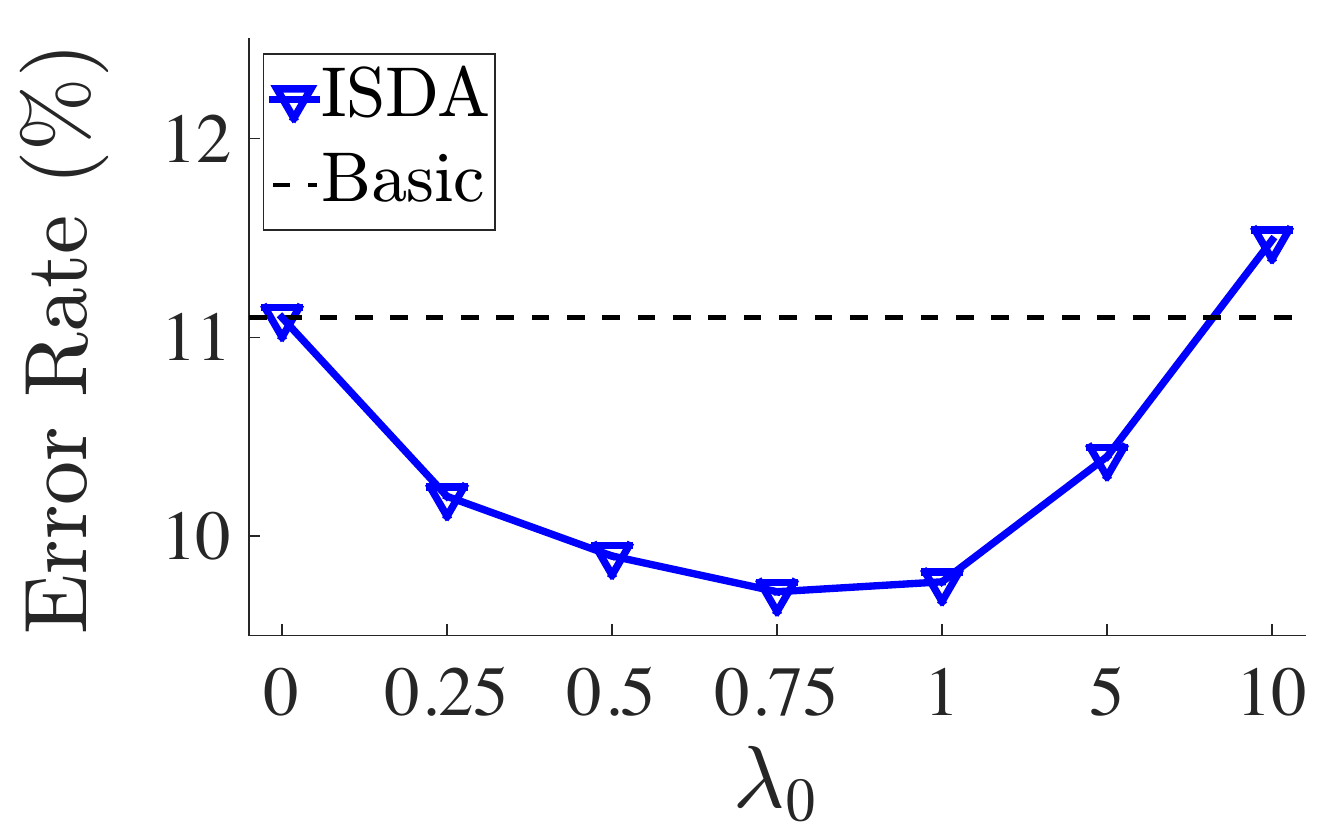}}
    \vskip -0.1in
    \caption{Sensitivity analysis of ISDA. For supervised learning, we report the test errors of Wide-ResNet-28-10 on CIFAR-100 with different values of $\lambda_0$. The Cutout augmentation is adopted. For semi-supervised learning, we present the results of VAT + ISDA on CIFAR-10 with 4,000 labels.}
    \label{sensitivity}
    \end{center}
    \vskip -0.2in
\end{figure}

\begin{table}[t]
    \centering
    \caption{The ablation study for ISDA.}
    \vskip -0.1in
    \label{Tab05}
    \setlength{\tabcolsep}{0.7mm}{
    \renewcommand\arraystretch{1.21} 
    \begin{tabular}{l|c|c|c}
        \hline
        \multirow{2}{*}{Setting} & \multirow{2}{*}{CIFAR-10} & \multirow{2}{*}{CIFAR-100} & CIFAR-100\\
        &&& + Cutout\\
            \hline
            Basic & 3.82 $\pm$ 0.15\% & 18.58 $\pm$ 0.10\% & 18.05 $\pm$ 0.25\%\\
            \hline
            Identity matrix & 3.63 $\pm$ 0.12\% & 18.53 $\pm$ 0.02\% & 17.83 $\pm$ 0.36\% \\
            Diagonal matrix & 3.70 $\pm$ 0.15\% & 18.23 $\pm$ 0.02\% & 17.54 $\pm$ 0.20\% \\
            Single covariance matrix & 3.67 $\pm$ 0.07\% & 18.29 $\pm$ 0.13\% & 18.12 $\pm$ 0.20\% \\
            Constant $\lambda_0$ & 3.69 $\pm$ 0.08\% & 18.33 $\pm$ 0.16\% & 17.34 $\pm$ 0.15\% \\
            \hline
            ISDA & \textbf{3.58 $\pm$ 0.15\%} & \textbf{17.98 $\pm$ 0.15\%} & \textbf{16.95  $\pm$  0.11\%} \\        
        \hline
    \end{tabular}}
    \vskip -0.1in
\end{table}

\subsubsection{Ablation Study}
To get a better understanding of the effectiveness of different components in ISDA, we conduct a series of ablation studies. In specific, several variants are considered: 
(1) \textit{Identity matrix} means replacing the covariance matrix $\Sigma_j$ by the identity matrix. 
(2) \textit{Diagonal matrix} means using only the diagonal elements of the covariance matrix $\Sigma_j$.
(3) \textit{Single covariance matrix} means using a global covariance matrix computed from the features of all classes.
(4) \textit{Constant $\lambda_0$} means using a constant $\lambda_0$ instead of a function of the training iterations. 

Table \ref{Tab05} presents the ablation results. Adopting the identity matrix increases the test error by 0.05\%, 0.55\% and 0.88\% on CIFAR-10, CIFAR-100 and CIFAR-100+Cutout, respectively. Using a single covariance matrix greatly degrades the generalization performance as well. The reason is likely to be that both of them fail to find proper directions in the deep feature space to perform meaningful semantic transformations. Adopting a diagonal matrix also hurts the performance as it does not consider correlations of features.

\section{Conclusion}
In this paper, we proposed an efficient implicit semantic data augmentation algorithm (ISDA) to complement existing data augmentation techniques. Different from existing approaches leveraging generative models to augment the training set with semantically transformed samples, our approach is considerably more efficient and easier to implement. In fact, we showed that ISDA can be formulated as a novel robust loss function, which is compatible with any deep network using the softmax cross-entropy loss. Additionally, ISDA can also be implemented efficiently in semi-supervised learning via the semantic consistency training technique. Extensive experimental results on several competitive vision benchmarks demonstrate the effectiveness and efficiency of the proposed algorithm.

\ifCLASSOPTIONcompsoc
  \section*{Acknowledgments}
\else
  \section*{Acknowledgment}
\fi

This work is supported in part by the Ministry of Science and Technology of China under Grant 2018AAA0101604, the National Natural Science Foundation of China under Grants 62022048, 61906106 and 61936009, the Institute for Guo Qiang of Tsinghua University and Beijing Academy of Artificial Intelligence.

\ifCLASSOPTIONcaptionsoff
  \newpage
\fi



%



\bibliographystyle{IEEEtran}
\bibliography{IEEEabrv,IEEEtran}

\newpage
\onecolumn
\appendices

\section{Training Details for Supervised Learning}
\label{Training_Details_sup}

On CIFAR, we implement the ResNet, SE-ResNet, Wide-ResNet, ResNeXt and DenseNet.
The SGD optimization algorithm with a Nesterov momentum is applied to train all models. Specific hyper-parameters for training are presented in Table \ref{Training_hp}.

\begin{table*}[h]
    \centering
    \vskip -0.1in
    \caption{Training configurations on CIFAR. `$l_r$' denotes the learning rate.}
    \label{Training_hp}
    \setlength{\tabcolsep}{1.5mm}{
    \vskip -0.1in
    \renewcommand\arraystretch{1.25} 
    \begin{tabular}{c|c|c|c|c|c|c}
    \hline
    Network & Total Epochs & Batch Size & Weight Decay & Momentum & Initial $l_r$ & $l_r$ Schedule \\
    \hline
    ResNet & 160 & 128 & 1e-4 & 0.9 & 0.1 & Multiplied by 0.1 in $80^{th}$ and $120^{th}$ epoch. \\
    \hline
    SE-ResNet & 200 & 128 & 1e-4 & 0.9 & 0.1 & Multiplied by 0.1 in $80^{th}$, $120^{th}$ and $160^{th}$ epoch. \\
    \hline
    Wide-ResNet & 240 & 128 & 5e-4 & 0.9 & 0.1 & Multiplied by 0.2 in $60^{th}$, $120^{th}$, $160^{th}$ and $200^{th}$ epoch. \\
    \hline
    DenseNet-BC & 300 & 64 & 1e-4 & 0.9 & 0.1 & Multiplied by 0.1 in $150^{th}$, $200^{th}$ and $250^{th}$ epoch. \\
    \hline
    ResNeXt & 350 & 128 & 5e-4 & 0.9 & 0.05 & Multiplied by 0.1 in $150^{th}$, $225^{th}$ and $300^{th}$ epoch. \\
    \hline
    Shake Shake &\multirow{1}{*}{1800}&\multirow{1}{*}{64}&\multirow{1}{*}{1e-4}&\multirow{1}{*}{0.9}&\multirow{1}{*}{0.1}&\multirow{1}{*}{Cosine learning rate.} \\
    \hline
    \end{tabular}}
    \vskip -0.1in
\end{table*}

On ImageNet, we train all models for 300 epochs using the same L2 weight decay and momentum as CIFAR. The initial learning rate is set to 0.2 and annealed with a cosine schedule \cite{loshchilov2016sgdr}. The size of mini-batch is set to 512. We adopt $\lambda_0=1$ for DenseNets and $\lambda_0=7.5$ for ResNets and ResNeXts, except for using $\lambda_0=5$ for ResNet-101.

\section{Training Details for Semi-supervised Learning}
\label{Training_Details_semi_sup}
In semi-supervised learning experiments, we implement the CNN-13 network, which has been widely used for semi-supervised learning \cite{luo2018smooth, xie2019unsupervised, verma2019interpolation, laine2016temporal, tarvainen2017mean, miyato2018virtual}. The network architecture of CNN-13 is shown in Table \ref{table:cnn13}. Following \cite{verma2019interpolation}, the network is trained for 400 epochs using the SGD optimization algorithm with a Nesterov momentum. The initial learning rate is set to 0.1 and then annealed to zero with the cosine annealing technique. We use a momentum of 0.9 and a L2 weight decay of 1e-4. The batch size is set to 128.
\begin{table}[h]
  \centering
  \vskip -0.1in
	\caption{\label{table:cnn13}
    The network architecture of CNN-13.}
    \vskip -0.1in
    \setlength{\tabcolsep}{1.5mm}{
      \renewcommand\arraystretch{1.15} 
  \begin{tabular}{c}
		\hline
    Input: $3\times32\times32$ RGB image\\
    \hline
		$3\times3$ conv. $128$ lReLU ($\alpha=0.1$)\\
		$3\times3$ conv. $128$ lReLU ($\alpha=0.1$)\\
    $3\times3$ conv. $128$ lReLU ($\alpha=0.1$)\\
    \hline
    $2\times2$ max-pool, dropout 0.5 \\
    \hline
		$3\times3$ conv. $256$ lReLU ($\alpha=0.1$)\\
		$3\times3$ conv. $256$ lReLU ($\alpha=0.1$)\\
    $3\times3$ conv. $256$ lReLU ($\alpha=0.1$)\\
    \hline
    $2\times2$ max-pool, dropout 0.5 \\
    \hline
		$3\times3$ conv. $512$ lReLU ($\alpha=0.1$)\\
		$1\times1$ conv. $256$ lReLU ($\alpha=0.1$) \\
    $1\times1$ conv. $128$ lReLU ($\alpha=0.1$) \\
    \hline
    $6\times6$ Global average pool\\
    \hline
		Fully connected $128 \to 10$ or $128 \to 100$ softmax\\
		\hline
  \end{tabular}}
  \vskip -0.1in
\end{table}

\section{Hyper-parameter Selection for Baselines}
\label{hyper-para-baseline}
All baselines are implemented with the training configurations mentioned above.
In supervised learning experiments, the dropout rate is set as 0.3 for comparison if it is not applied in the basic model, following the instruction in \cite{Srivastava2014DropoutAS}. For noise rate in disturb label, 0.05 is adopted in Wide-ResNet-28-10 on both CIFAR-10 and CIFAR-100 datasets and ResNet-110 on CIFAR 10, while 0.1 is used for ResNet-110 on CIFAR 100. Focal Loss contains two hyper-parameters $\alpha$ and $\gamma$. Numerous combinations have been tested on the validation set and we ultimately choose $\alpha=0.5$ and $\gamma=1$ for all four experiments. 
For L$_q$ loss, although \cite{Zhang2018GeneralizedCE} states that $q=0.7$ achieves the best performance in most conditions, we suggest that $q=0.4$ is more suitable in our experiments, and therefore adopted. For center loss, we find its performance is largely affected by the learning rate of the center loss module, therefore its initial learning rate is set as 0.5 for the best generalization performance.

For generator-based augmentation methods, we apply the GANs structures introduced in \cite{arjovsky2017wasserstein, mirza2014conditional, odena2017conditional, chen2016infogan} to train the generators. For WGAN, a generator is trained for each class in CIFAR-10 dataset. For CGAN, ACGAN and infoGAN, a single model is simply required to generate images of all classes. A 100 dimension noise drawn from a standard normal distribution is adopted as input, generating images corresponding to their label. Specially, infoGAN takes additional input with two dimensions, which represent specific attributes of the whole training set. Synthetic images are involved with a fixed ratio in every mini-batch. Based on the experiments on the validation set, the proportion of generalized images is set as $1/6$.

In semi-supervised learning experiments, we use exactly the same hyper-parameter settings and searching policies mentioned in the original papers of baselines \cite{laine2016temporal, tarvainen2017mean, miyato2018virtual}. Our implementations achieve the same or sometimes better results than the reported results in their papers under the same experimental settings.

\section{Results on COCO}
\label{COCO_results}
To demonstrate that deep networks learn better representations with ISDA, we employ the models (i.e., ResNet-50) reported in Table \ref{ImageNet Results} as backbones, and finetune them for the object detection task and the instance segmentation task on MS COCO \cite{lin2014microsoft}. We consider three state-of-the-art methods, Faster-RCNN \cite{ren2015faster}, Mask-RCNN \cite{he2017mask} and Cascade-RCNN \cite{cai2018cascade} implemented in the default configuration of MMDetection \cite{mmdetection} with FPN backbones \cite{lin2017feature}. The results of the two tasks are presented in Table \ref{Detection Results} and Table \ref{InstanceSeg} respectively. We find that ISDA contributes to a better initialization than the standard models, improving the performance of different algorithms on both tasks consistently.

\begin{table*}[t]
    \centering
    \caption{
        Object detection results on COCO. We initialize the backbones of several modern detection methods using ResNets pre-trained on ImageNet with and without ISDA. The COCO style box average precision (AP) metric is adopted, where AP$_{50}$ and AP$_{75}$ denote AP over $50\%$ and $75\%$ IoU thresholds, mAP takes the average value of AP over different thresholds ($50\%$--$95\%$), and mAP$_\textnormal{S}$, mAP$_\textnormal{M}$ and mAP$_\textnormal{L}$ denote mAP for objects at different scales. For a fair comparison, we report the baselines using the official pre-trained models provided by PyTorch in the `original' row. All results are presented in percentages (\%). The better results are \textbf{bold-faced}.
    }
    \vskip -0.1in
    \label{Detection Results}
    \setlength{\tabcolsep}{3.5mm}{
    \vspace{5pt}
    \renewcommand\arraystretch{1.21}
    \begin{tabular}{c|c|c|c|c|c|c|c}
    \hline
    Method&Backbone& mAP & AP$_{50}$ & AP$_{75}$ & mAP$_\textnormal{S}$ & mAP$_\textnormal{M}$ & mAP$_\textnormal{L}$\\
    \hline
    \multirow{3}{*}{Faster-RCNN-FPN \cite{ren2015faster}} &  ResNet-50 (Original) &  37.4 & 58.1 & 40.4 & 21.2 & 41.0 & 48.1   \\
    &  ResNet-50 (Our Implementation) &  38.0 & 58.8 & 41.2 & 21.6 & 41.9 & 48.6 \\
    &  ResNet-50 + ISDA &  \textbf{38.6} & \textbf{59.5} & \textbf{41.8} & \textbf{22.8} & \textbf{42.6} & \textbf{48.7} \\
    \cline{2-8}
    \hline
    \multirow{3}{*}{Mask-RCNN-FPN \cite{he2017mask}}&  ResNet-50 (Original) &  38.2 & 58.8 & 41.4 & 21.9 & 40.9 & 49.5   \\
    &  ResNet-50 (Our Implementation) &  38.9 & 59.5 & 42.4 & 22.6 & 42.2 & \textbf{50.1} \\
    &  ResNet-50 + ISDA &  \textbf{39.4} & \textbf{60.1} & \textbf{43.0} & \textbf{23.1} & \textbf{43.4} & {49.8} \\
    \cline{2-8}
    \hline
    \multirow{3}{*}{Cascade-RCNN-FPN \cite{cai2018cascade}}&  ResNet-50 (Original) &  40.3 & 58.6 & 44.0 & 22.5 & 43.8 & 52.9   \\
    &  ResNet-50 (Our Implementation) &  40.8 & 59.2 & 44.4 & 22.8 & 44.4 & 53.1 \\
    &  ResNet-50 + ISDA &  \textbf{41.4} & \textbf{59.7} & \textbf{45.1} &\textbf{23.8} & \textbf{44.9} & \textbf{53.8} \\
    \cline{2-8}
    \hline

    \end{tabular}}
    \vskip -0.15in
\end{table*}

\begin{table*}[t]
    \centering
    \caption{
        Instance segmentation results on COCO. We report the COCO style mask average precision (AP) of Mask-RCNN with different backbones. The baselines using the official pre-trained models provided by PyTorch is reported in the `original' row. All results are presented in percentages (\%). The better results are \textbf{bold-faced}.
    }
    \vskip -0.1in
    \label{InstanceSeg}
    \setlength{\tabcolsep}{4mm}{
    \vspace{5pt}
    \renewcommand\arraystretch{1.21}
    \begin{tabular}{c|c|c|c|c|c|c|c}
    \hline
    Method&Backbone& mAP & AP$_{50}$ & AP$_{75}$ & mAP$_\textnormal{S}$ & mAP$_\textnormal{M}$ & mAP$_\textnormal{L}$\\
    \hline
    \multirow{3}{*}{Mask-RCNN-FPN \cite{he2017mask}}&  ResNet-50 (Original) &  34.7 & 55.7 & 37.2 & 18.3 & 37.4 & 47.2   \\
    &  ResNet-50 (Our Implementation) &  35.2 & 56.5 & 37.6 & 18.8 & 38.4 & \textbf{47.5} \\
    &  ResNet-50 + ISDA &  \textbf{35.6} & \textbf{56.7} & \textbf{38.0} & \textbf{19.2} & \textbf{39.2} & {47.1} \\
    \hline
    \end{tabular}}
    \vskip -0.15in
\end{table*}

\section{Visualization of Augmented Samples}
\label{reverse_alg}


To explicitly demonstrate the semantic changes generated by ISDA, we propose an algorithm to map deep features back to the pixel space. Some extra visualization results are shown in Figure \ref{Extra}.

An overview of the algorithm is presented in Figure \ref{Reversing}.
As there is no closed-form inverse function for convolutional networks like ResNet or DenseNet, the mapping algorithm acts in a similar way to \cite{mahendran2015understanding} and \cite{Upchurch2017DeepFI}, by fixing the model and adjusting inputs to find images corresponding to the given features. However, given that ISDA augments semantics of images in essence, we find it ineffective to directly optimize the inputs in the pixel space. Therefore, we add a fixed pre-trained generator $\mathcal{G}$, which is obtained from training a wasserstein GAN \cite{arjovsky2017wasserstein} on CIFAR and the pre-trained BigGAN \cite{brock2018large} on ImgeNet\footnote{https://github.com/ajbrock/BigGAN-PyTorch}, to produce images for the classification model, and optimize the inputs of the generator instead. This approach makes it possible to effectively reconstruct images with augmented semantics.

\begin{figure*}[t]
  \begin{center}
  \centerline{\includegraphics[width=0.9\columnwidth]{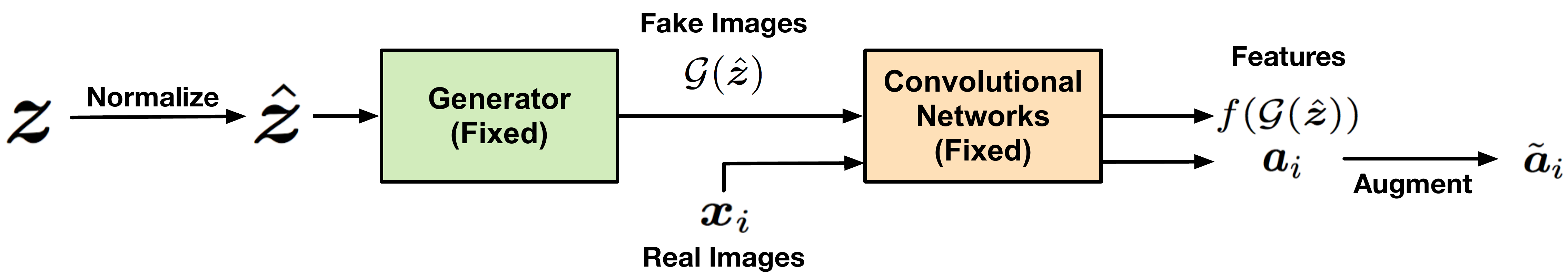}}
  \vskip -0.1in
  \caption{Overview of the reverse mapping algorithm. We adopt a fixed generator $\mathcal{G}$ obtained by training a wasserstein GAN to generate fake images for convolutional networks, and optimize the inputs of $\mathcal{G}$ in terms of the consistency in both the pixel space and the deep feature space.}
  \label{Reversing}
  \end{center}
  \vskip -0.1in
\end{figure*}


\begin{figure*}
  \begin{center}
  \includegraphics[width=0.8\columnwidth]{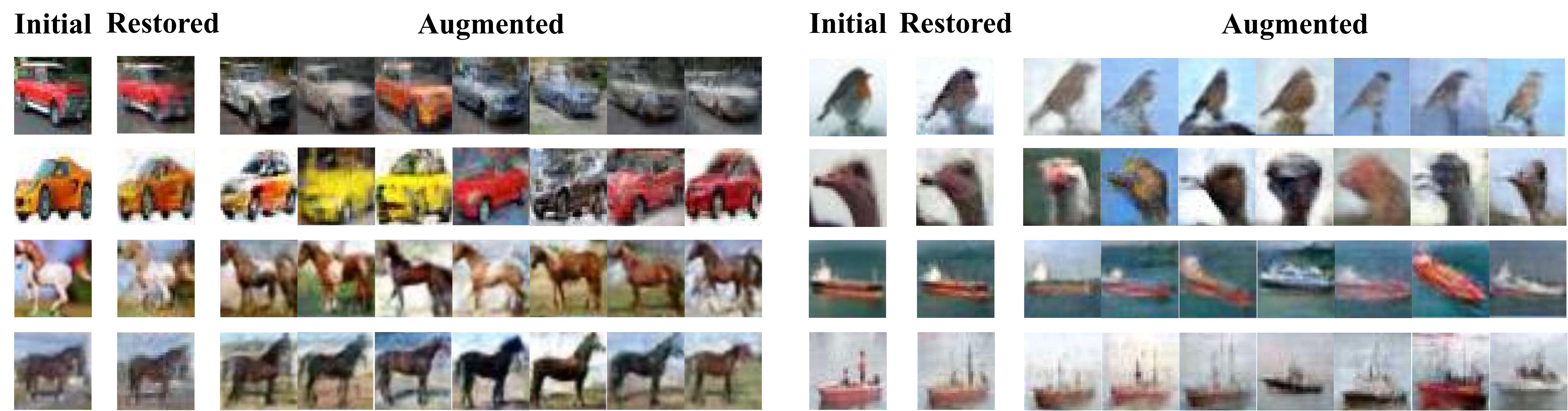}
  \caption{Visualization of the semantically augmented images on CIFAR. ISDA is able to alter the semantics of images that are unrelated to the class identity, like backgrounds, colors of skins, type of cars, etc.}
  \vskip -0.1in
  \label{Extra}
  \end{center}
  \vskip -0.1in
\end{figure*}

\begin{figure*}[!t]
  \begin{center}
  \includegraphics[width=\columnwidth]{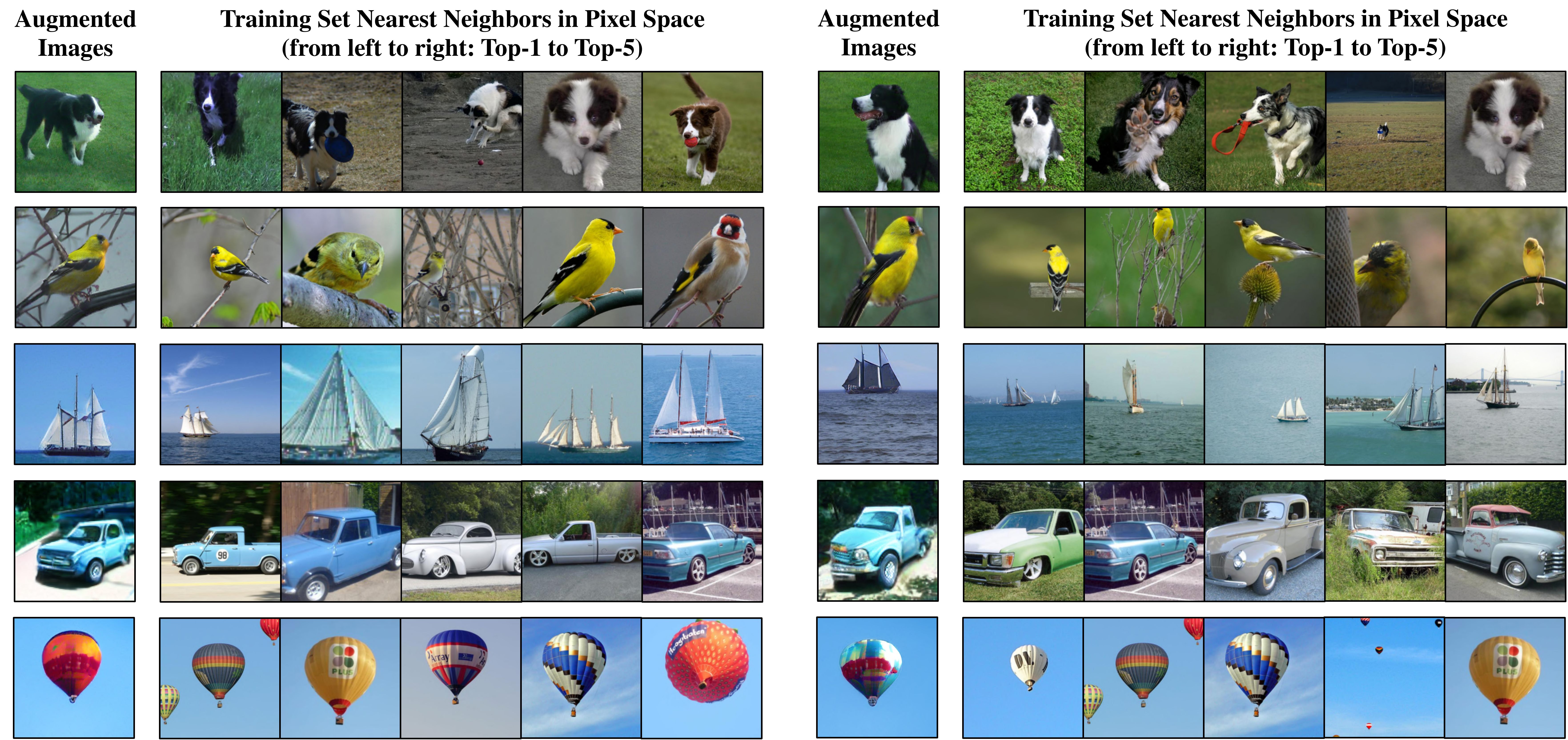}
  \caption{Training set nearest neighbors of the augmented images in pixel space on ImageNet.}
  \vskip -0.1in
  \label{NN}
  \end{center}
  \vskip -0.1in
\end{figure*}
The mapping algorithm can be divided into two steps:

\textbf{Step I. }Assume a random variable $\bm{z}$ is normalized to $\hat{\bm{z}}$ and input to $\mathcal{G}$, generating fake image $\mathcal{G}(\hat{\bm{z}})$. $\bm{x}_{i}$ is a real image sampled from the dataset (such as CIFAR). $\mathcal{G}(\hat{\bm{z}})$ and $\bm{x}_{i}$ are forwarded through a pre-trained convolutional network to obtain deep feature vectors $f(\mathcal{G}(\hat{\bm{z}}))$ and $\bm{a}_{i}$. The first step of the algorithm is to find the input noise variable $\bm{z}_{i}$ corresponding to $\bm{x}_{i}$, namely 
\begin{equation}
    \label{ra1}
    \bm{z}_{i} = \arg\min_{\bm{z}} \|f(\mathcal{G}(\hat{\bm{z}})) - \bm{a}_{i}\|_{2}^{2} +
    \eta\|\mathcal{G}(\hat{\bm{z}}) - \bm{x}_{i}\|_{2}^{2},\
    s.t.\ \hat{\bm{z}} = \frac{\bm{z} - \overline{\bm{z}}}{std(\bm{z})},
\end{equation}
where $ \overline{\bm{z}}$ and $std(\bm{z})$ are the average value and the standard deviation of $\bm{z}$, respectively.
The consistency of both the pixel space and the deep feature space are considered in the loss function, and we introduce a hyper-parameter $\eta$ to adjust the relative importance of two objectives.

\textbf{Step II. }We augment $\bm{a}_{i}$ with ISDA, forming $\tilde{\bm{a}}_{i}$ and reconstructe it in the pixel space. Specifically, we search for $\bm{z}_{i}'$ corresponding to $\tilde{\bm{a}}_{i}$ in the deep feature space, with the start point $\bm{z}_{i}$ found in Step I:
\begin{equation}
    \label{ra2}
    \bm{z}_{i}' = \arg\min_{\bm{z'}} \|f(\mathcal{G}(\hat{\bm{z}}')) - \tilde{\bm{a}}_{i}\|_{2}^{2},\
    s.t.\ \hat{\bm{z}}' = \frac{\bm{z'} - \overline{\bm{z'}}}{std(\bm{z'})}.
\end{equation}
As the mean square error in the deep feature space is optimized to nearly 0, $\mathcal{G}(\hat{\bm{z}_{i}}')$
is supposed to represent the image corresponding to $\tilde{\bm{a}}_{i}$.

On CIFAR, the proposed algorithm is performed on a single batch. In practice, a ResNet-32 network is used as the convolutional network. We solve Eq. (\ref{ra1}), (\ref{ra2}) with the standard gradient descent (GD) algorithm of 10,000 iterations. The initial learning rate is set as 10 and 1 for Step I and Step II respectively, and is divided by 10 every 2,500 iterations. We apply a momentum of 0.9 and a L2 weight decay of 1e-4. 

In addition, to illustrate that the GAN model does not trivially ``remember'' the samples in the training set, we present the Top-5 nearest neighbors of the augmented images in pixel space on ImageNet. One can observe that the augmented samples and the nearest neighbors are visually distinct, suggesting that the former are not simply obtained via memorizing training data.

\end{document}